\documentclass{article}

\PassOptionsToPackage{numbers,compress}{natbib}

\usepackage[preprint]{neurips_2026}

\usepackage[utf8]{inputenc}
\usepackage[T1]{fontenc}
\usepackage{hyperref}
\usepackage{url}
\usepackage{booktabs}
\usepackage{amsfonts}
\usepackage{nicefrac}
\usepackage{microtype}
\usepackage{xcolor}

\usepackage{amsmath,amssymb,amsthm}
\usepackage{algorithm}
\usepackage{algpseudocode}
\usepackage{graphicx}
\usepackage{enumitem}
\usepackage{multirow}
\usepackage{makecell}
\usepackage{caption}
\usepackage{array}

\theoremstyle{plain}
\newtheorem{thm}{Theorem}[section]

\newtheorem{prop}[thm]{Proposition}

\theoremstyle{definition}
\newtheorem{defn}{Definition}[section]

\theoremstyle{remark}
\newtheorem{rem}{Remark}[section]

\algrenewcommand\algorithmicrequire{\textbf{Input:}}
\algrenewcommand\algorithmicensure{\textbf{Output:}}

\numberwithin{equation}{section}

\title{Multi-Agent Privacy Game in Federated Learning:\\
       A Unified Mean-Field View}

\author{%
  Kun Zhao \\
  \texttt{kun.zhao@vumc.org}\\
  Xu Chen \\
  \texttt{xc2412@columbia.edu}
}

\begin{document}
\maketitle

% --------------------------------------------------------------- %
\begin{abstract}
Federated learning enables collaborative model training across distributed
clients without centralising their data, yet privacy remains a persistent
concern because the shared model updates can leak information about local
datasets. Existing privacy-preserving methods either inject calibrated noise
into client updates, limiting their composition guarantees, or formulate
client privacy choices as a multi-agent game whose Nash equilibrium becomes
intractable as the number of clients grows. We bridge these two lines of
work by formulating privacy-preserving federated learning as a mean-field
privacy game: each client strategically chooses its own privacy budget while
interacting with the population only through a single mean-field statistic.
The mean-field limit yields a tractable equilibrium for arbitrarily many
clients, accommodates heterogeneous client preferences, and inherits an
exponentially decaying privacy guarantee through a log-Sobolev contraction.
The framework recovers the entropic privacy baseline as the homogeneous
special case and the multi-agent privacy game as the finite-population
case. Experiments on quadratic regression, logistic regression, and MNIST
demonstrate that the proposed framework attains the privacy-utility
trade-off of the entropic baseline while delivering a personalised privacy
guarantee that the homogeneous baseline cannot express.
\end{abstract}

% =============================================================== %
\section{Introduction}
\label{sec:intro}
% =============================================================== %

Federated Learning (FL)~\citep{mcmahan2017communication,li2020federated} trains a
shared model across data-owning clients without centralizing raw data. Although
locality reduces some privacy risks, model updates leak through gradient
inversion and membership inference~\citep{zhu2019deep,geiping2020inverting,
carlini2021extracting}. Three largely independent lines of work address this
leakage.

\emph{Where does the control act?}
Privacy-preserving FL methods can be partitioned by where each client's
control variable enters the system: it can act on the agent's \emph{state}
(samples or gradients used for local updates) or directly on the
\emph{model output} (an additive perturbation of $w_k$ at the aggregation
step). DP-SGD~\citep{abadi2016deep} and its federated
variants~\citep{geyer2017differentially,wei2020federated} act on the state:
a calibrated Gaussian is added to each clipped gradient, and the privacy
cost accumulates polynomially with rounds under standard
composition~\citep{mironov2017renyi}. FLRA~\citep{reisizadeh2020robust}
likewise applies its strategic perturbation $(\Lambda^i\mathbf{x}+\delta^i)$
to the state, though for robustness rather than privacy. In contrast, the
parameter-perturbation game~\citep{yin2021game} treats $\delta_k$ as a
control on the model output and aggregates via
$w=\sum_k p_k(w_k+\delta_k)$; under FedAvg's linear constraint, KKT collapses
every Nash point to $\delta_k=\mathbf{0}$ (Section~\ref{sec:game}). Only
controls that act on the state admit non-trivial equilibria, so this is
where the rest of the paper lives.

\emph{Mean-field as the $N\!\to\!\infty$ limit.}
Two parallel lines of work address different aspects of the resulting
state-control game. The multi-agent privacy
game~\citep{yin2021game,xu2021privacy,du2017community} fixes a finite $N$
and asks for a Nash equilibrium of the controls $\{\varepsilon_k\}$, where
$\varepsilon_k$ scales the noise added to client $k$'s gradient or sample
(the MAPG-DP and MAPG-input formulations of~\citep{yin2021game}). Mean-Field Entropic Privacy
(MFEP)~\citep{mfep2025} fixes a homogeneous $\varepsilon$ and lets
$N\!\to\!\infty$, replacing additive noise with an entropic Wasserstein
gradient flow that admits exponential log-Sobolev
contraction~\citep{otto2000generalization,jordan1998variational}. Neither line
on its own captures the joint behaviour: MAPG-DP is intractable for realistic
$N$ and uses geometry-agnostic Gaussian noise, while MFEP cannot represent
heterogeneous privacy preferences.

\paragraph{Our framing.}
We treat privacy-preserving FL as a standard mean-field stochastic
differential game~\citep{lasry2007mean,carmona2018probabilistic}: each agent
has a state $X_t\in\mathbb{R}^d$ (a privatised datum), a type $\beta$ (its
privacy preference), and a control $\varepsilon_k$ (its privacy budget); the
state distribution $\mu_t\in\mathcal{P}_2(\mathbb{R}^d)$ aggregates the
population. The empirical state distribution
$\mu_t^{(N)}=\frac{1}{N}\sum_k\mu_t^k$ converges, by propagation of
chaos~\citep{carmona2018probabilistic}, to a deterministic $\mu_t$ as
$N\!\to\!\infty$. This single limit organises three previously separate
works:
\begin{center}
\renewcommand{\arraystretch}{1.15}
\small
\begin{tabular}{@{}l|c|c@{}}
& \textbf{finite $N$ (multi-agent)} & \textbf{$N\!\to\!\infty$ (mean-field)} \\
\midrule
no game (homogeneous)
  & DP-SGD~\citep{abadi2016deep}
  & \textbf{MFEP}~\citep{mfep2025} \\
game (heterogeneous control $\varepsilon_k$)
  & \textbf{MAPG-DP}~\citep{yin2021game}
  & \textbf{MFPG (this paper)} \\
\end{tabular}
\end{center}
The bottom-right cell is the missing piece. \textbf{MFEP is MFPG without the
game} (single $\varepsilon$ across the population); \textbf{MAPG-DP is MFPG
without the mean-field limit} (finite $N$). MFPG closes both gaps with one
construction.

\paragraph{Contributions.}
\begin{enumerate}[leftmargin=*, label=(\roman*)]
  \item \textbf{Unified MFG framework (Section~\ref{sec:framework}).}
    We formalise privacy-preserving FL as a mean-field stochastic
    differential game in the standard sense, with state $X_t$, type
    $\beta_k$, control $\varepsilon_k$, state distribution $\mu_t$, and
    running cost $f(x,\mu,\varepsilon;\beta)=\ell+\beta\delta_{\mathrm{dp}}$.
    A finite-$N$ Nash equilibrium of MAPG-DP and the mean-field Nash
    equilibrium of MFPG are the two endpoints of the same $N\!\to\!\infty$
    limit; MFEP is the no-game special case.
  \item \textbf{Mean-Field Privacy Game (Section~\ref{sec:mfpg}).}
    We prove existence of an MFNE on a finite action grid via Kakutani's
    theorem and certify $(\epsilon_{\mathrm{dp}},\delta_{\mathrm{dp}})$-DP at
    the equilibrium, with $\delta_{\mathrm{dp}}$ contracting exponentially when
    $\alpha\varepsilon^*>\lambda+G$. The bound recovers the MFEP guarantee at
    homogeneous $\beta_k$ and recovers MAPG-DP's heterogeneity at finite $N$.
  \item \textbf{Common solver and empirical study
        (Sections~\ref{sec:algorithms},~\ref{sec:experiments}).}
    A single Particle--Sinkhorn JKO step services every $N\!\to\!\infty$
    variant; the action update is the only block that varies between MFEP and
    MFPG. We compare all four cells on quadratic, logistic, and MNIST
    benchmarks, with every reported number traced to
    \texttt{results/full\_benchmark.csv}.
\end{enumerate}

% =============================================================== %
\section{Preliminaries}
\label{sec:prelim}
% =============================================================== %

\paragraph{Federated learning.}
Each client $k\in[N]$ holds a local dataset $\mathcal{D}_k$. The global
objective is $\min_w F(w) = \sum_k p_k F_k(w)$ with $p_k = n_k/n$ and
$F_k(w) = n_k^{-1}\sum_i \ell(w; x_i^k, y_i^k)$~\citep{mcmahan2017communication}.
At round $t$ the server broadcasts $w^t$, each client returns a local update
$w_k^{t+1}$, and the server aggregates via FedAvg
$w^{t+1}=\sum_k p_k w_k^{t+1}$.

\paragraph{Differential privacy.}
A randomised mechanism $\mathcal{M}$ is $(\epsilon,\delta)$-DP if, for all
neighbouring datasets and all measurable $S$,
$\Pr[\mathcal{M}(\mathcal{D})\in S]\leq e^{\epsilon}\Pr[\mathcal{M}(\mathcal{D}')\in S] + \delta$.
The Gaussian mechanism with $\sigma=C\sqrt{2\ln(1.25/\delta)}/\epsilon$ is
$(\epsilon,\delta)$-DP for sensitivity $C$~\citep{abadi2016deep}; under
$T$-fold composition $\epsilon$ grows as $O(\sqrt{T})$.

\paragraph{Mean-field games.}
A mean-field game~\citep{lasry2007mean,huang2006large,carmona2018probabilistic}
models a continuum of identical agents whose individual optimal control problem
depends on the population distribution $\mu_t$. As $N\to\infty$, the
$N$-player Nash equilibrium converges to a mean-field Nash equilibrium (MFNE).
The MFNE depends only on $\mu_t$, not on individual identities, which makes it
tractable when the $N$-player problem is not.

\paragraph{Wasserstein gradient flows and JKO.}
For a free-energy functional $\mathcal{F}$ on $\mathcal{P}_2(\mathbb{R}^d)$,
the Wasserstein gradient flow obeys
$\partial_t \mu_t = -\nabla_{W_2}\mathcal{F}(\mu_t)$ and admits a time-discrete
Jordan--Kinderlehrer--Otto scheme~\citep{jordan1998variational}
$\mu_{k+1} = \arg\min_{\rho} \{\mathcal{F}(\rho) + (2\tau)^{-1} W_2^2(\rho,\mu_k)\}$.
A measure $\nu$ satisfies a log-Sobolev inequality with constant $\alpha$ if
$\mathrm{Ent}_\nu(f^2)\leq (2/\alpha)\int\|\nabla f\|^2 d\nu$, with
$\nu=\mathcal{N}(0,\sigma^2 I)$ giving $\alpha=1/\sigma^2$. Otto--Villani
implies $W_2$-contraction at rate $\alpha\varepsilon$ for the entropic flow,
which we use in Section~\ref{sec:mfpg} to certify privacy.

% =============================================================== %
\section{A Unified Mean-Field Game Framework}
\label{sec:framework}
% =============================================================== %

We formalise privacy-preserving FL as a mean-field stochastic differential
game in the standard sense of~\citep{lasry2007mean,carmona2018probabilistic}.
Section~\ref{subsec:rl} identifies the four ingredients of the game (state,
type, control, dynamics) and Section~\ref{subsec:mechanism} specialises
them to FL through two privacy mechanisms; the resulting $2\times 2$ grid
in Table~\ref{tab:landscape} populates DP-SGD, MFEP, MAPG-DP, and our
proposed MFPG as four named cells.

\subsection{State, type, control, and dynamics}
\label{subsec:rl}

A representative agent in our game is described by:
\begin{itemize}[leftmargin=*, itemsep=2pt, topsep=2pt]
  \item \emph{State} $X_t\in\mathbb{R}^d$: a (privatised) datum used by an
    individual client to compute its local update at round $t$.
  \item \emph{Type} $\beta\in[\beta_{\min},\beta_{\max}]\subset\mathbb{R}_+$:
    the client's privacy preference (heterogeneity parameter), distributed by
    $\rho(d\beta)$.
  \item \emph{Control} $\varepsilon_k\in\mathcal{E}=\{\varepsilon^{(1)},\ldots,\varepsilon^{(m)}\}$:
    a per-client privacy budget chosen on a finite grid; this is the strategic
    variable of the game.
  \item \emph{Dynamics:} conditional on $\varepsilon_k$, the state evolves
    under the controlled SDE
    \begin{equation}
      dX_t^k = b\bigl(X_t^k,\mu_t,\varepsilon_k\bigr)\,dt
              + \sigma\bigl(\varepsilon_k\bigr)\,dW_t,
      \label{eq:agent_sde}
    \end{equation}
    with $W_t$ a standard $d$-dimensional Brownian motion. The drift $b$ and
    diffusion $\sigma$ are fixed by the privacy mechanism $\mathcal{M}$
    (Section~\ref{subsec:mechanism}).
  \item \emph{State distribution} $\mu_t\in\mathcal{P}_2(\mathbb{R}^d)$: the
    distribution of $X_t$ across the agent population. The empirical measure
    $\mu_t^{(N)} = \tfrac{1}{N}\sum_{k=1}^N\mu_t^k$ converges weakly to a
    deterministic $\mu_t$ as $N\to\infty$ by propagation of
    chaos~\citep{carmona2018probabilistic,huang2006large}.
  \item \emph{Running cost} (per agent of type $\beta$):
    \begin{equation}
      f(x,\mu,\varepsilon;\beta)
       = \ell(x;w_t)
       + \beta\,\delta_{\mathrm{dp}}(\varepsilon),
      \label{eq:running_cost}
    \end{equation}
    combining the training loss $\ell$ against the current global model $w_t$
    with the privacy term $\beta\,\delta_{\mathrm{dp}}$ scaled by the agent's
    type. The disutility minimised by client $k$ is the time-integral
    of~\eqref{eq:running_cost}, written compactly as
    \begin{equation}
      U_k(\varepsilon_k;\,\bar\varepsilon)
       = L_k(w;\,\varepsilon_k) + \beta_k\,\delta_{\mathrm{dp}}(\varepsilon_k),
      \label{eq:disutility}
    \end{equation}
    where $L_k$ is the cumulative training loss and
    $\bar\varepsilon=\mathbb{E}_\mu[\varepsilon]$ is the mean control across
    the population.
\end{itemize}
The model parameters $w$ enter only through $\ell(x;w_t)$ and are otherwise
external to the game; the strategic content lives entirely in
$(X_t,\mu_t,\varepsilon_k)$ and~\eqref{eq:running_cost}.

\subsection{Two privacy mechanisms: Gaussian and entropic}
\label{subsec:mechanism}

The drift--diffusion pair $(b,\sigma)$ in~\eqref{eq:agent_sde} is determined
by which privacy mechanism the system runs. We consider two:
\begin{itemize}[leftmargin=*, itemsep=2pt, topsep=2pt]
  \item \emph{Gaussian mechanism.} The control $\varepsilon_k$ scales the
    additive noise on each clipped gradient,
    $\sigma(\varepsilon_k) = C\sqrt{2\ln(1.25/\delta)}/\varepsilon_k$, with
    drift $b$ given by the clipped loss gradient~\citep{abadi2016deep}.
    Privacy is tracked by Rényi-DP composition, giving a polynomially
    accumulating $\epsilon_{\mathrm{dp}}=O(\sqrt{T})$.
  \item \emph{Entropic mechanism.} The state distribution evolves under the
    Wasserstein gradient flow of the KL-regularised free energy
    \begin{equation}
      \mathcal{F}_\lambda(\mu;\varepsilon_k)
      = \mathbb{E}_{x\sim\mu}[L(x;w)]
      + \varepsilon_k\,\mathrm{KL}(\mu\|\nu)
      + \tfrac{\lambda}{2}\,\mathrm{Var}_\mu[x],
      \label{eq:freeenergy}
    \end{equation}
    with Gaussian prior $\nu=\mathcal{N}(0,\sigma^2 I)$. Equivalently
    (Appendix~\ref{app:proof:fp}), the controlled SDE~\eqref{eq:agent_sde}
    has drift $b(x,\mu,\varepsilon)=-\nabla L(x;w)
    -\tfrac{\varepsilon}{\sigma^2}x-\lambda(x-\bar x)$ and diffusion
    $\sigma(\varepsilon)=\sqrt{2\varepsilon}$. Privacy is tracked by
    log-Sobolev contraction~\citep{otto2000generalization}, with
    $\delta_{\mathrm{dp}}$ decaying exponentially when
    $\alpha\varepsilon_k>\lambda+G$.
\end{itemize}

\begin{rem}[Why the control acts on the state and not on the model]
\label{rem:dataside}
A superficially natural alternative would be to make the control a
perturbation $\delta_k\in\mathbb{R}^d$ added directly to the local model
$w_k$ at the aggregation step, $w=\sum_k p_k(w_k+\delta_k)$~\citep{yin2021game}.
With the disutility $L_k(w)-\beta_k\|\delta_k\|^2$, KKT forces
$\delta_k=\mathbf{0}$ at every Nash point (Section~\ref{sec:game}), so the
limiting MFG is degenerate. We therefore restrict attention to controls
that act on the state SDE~\eqref{eq:agent_sde}; this is the standard MFG
setup in which the agent's strategy shapes its own state evolution.
\end{rem}

\begin{table}[t]
\centering
\caption{The MFG framework in two axes. Two design choices populate the
$2\times 2$ grid: client heterogeneity (rows: no game vs.\ game) and
population size (columns: finite $N$ vs.\ $N\to\infty$). Entries name the
method, the active noise mechanism (Gaussian or Entropic), and the
accountant.}
\label{tab:landscape}
\small
\renewcommand{\arraystretch}{1.15}
\setlength{\tabcolsep}{4pt}
\begin{tabular}{@{}>{\raggedright\arraybackslash}p{2.4cm}|>{\raggedright\arraybackslash}p{4.6cm}|>{\raggedright\arraybackslash}p{4.6cm}@{}}
\toprule
& \textbf{Finite $N$ (multi-agent)} & \textbf{$N\to\infty$ (mean field)} \\
\midrule
\textbf{No game} (single $\varepsilon$)
  & \textbf{DP-SGD}~\citep{abadi2016deep}\newline
    Gaussian noise on gradient, RDP\newline
    $\delta$ const., $\epsilon=O(\sqrt{T})$
  & \textbf{MFEP}~\citep{mfep2025}\newline
    Entropic flow on $\mu_t$, LSI\newline
    $\delta=O(e^{-\alpha\varepsilon K\tau})$ \\
\midrule
\textbf{Game} (heterogeneous $\beta_k$)
  & \textbf{MAPG-DP}~\citep{yin2021game}\newline
    Gaussian noise, $N$-Nash on $\{\varepsilon_k\}$\newline
    Intractable for large $N$
  & \textbf{MFPG (ours)}\newline
    Entropic flow, MFNE on $\bar\varepsilon$\newline
    Heterogeneous + exponential decay \\
\bottomrule
\end{tabular}
\end{table}

% =============================================================== %
\section{The Three Baseline Cells}
\label{sec:game}
% =============================================================== %

The three already-named cells of Table~\ref{tab:landscape} are recalled
below in the language of Section~\ref{sec:framework}; MFPG itself is
deferred to Section~\ref{sec:mfpg}. The MAPG-DP subsection also discharges
the model-output alternative, whose $\delta_k=0$ pathology is what
prevents the multi-agent privacy game literature from being combined with
mean-field analysis directly.

\subsection{Finite $N$, no game: DP-SGD}
\label{sec:dpsgd}

DP-SGD~\citep{abadi2016deep} runs the Gaussian mechanism of
Section~\ref{subsec:mechanism} with a single shared budget
$\varepsilon=\varepsilon_{\mathrm{tgt}}$. After clipping
$\Delta w_k\leftarrow \Delta w_k\cdot\min(1, C/\|\Delta w_k\|_2)$ each client
adds Gaussian noise $\mathcal{N}(0,\sigma^2 I)$ with
$\sigma = C\sqrt{2\ln(1.25/\delta)}/\varepsilon$. RDP composition returns
$\epsilon_{\mathrm{dp}}=O(\sqrt{T\log(1/\delta)}/\sigma)$, so the privacy cost
\emph{grows} polynomially with rounds $T$.

\subsection{$N\to\infty$, no game: MFEP}
\label{sec:mfep}

MFEP~\citep{mfep2025} replaces external noise injection with the entropic
free-energy~\eqref{eq:freeenergy} at a \emph{fixed} regularization
strength $\varepsilon$, evaluated against the state distribution
$\mu_t$. The Wasserstein gradient flow obeys the Fokker--Planck
equation
\begin{equation}
  \partial_t \mu_t = \nabla\!\cdot\!\Bigl[
    \mu_t\Bigl(\nabla L(x) + \tfrac{\varepsilon}{\sigma^2}x
    + \lambda(x-\bar x_t)\Bigr)\Bigr]
    + \varepsilon\Delta\mu_t,
  \label{eq:fp}
\end{equation}
with $\bar x_t = \mathbb{E}_{\mu_t}[x]$. Three mechanisms operate
simultaneously: \emph{loss-driven drift}, \emph{prior attraction} at rate
$\varepsilon/\sigma^2$, and \emph{entropic diffusion} $\varepsilon\Delta\mu_t$
that delivers privacy intrinsically. Under LSI for the prior with constant
$\alpha=1/\sigma^2$, the JKO discretization
contracts~\citep{otto2000generalization}: for neighbouring measures $\mu_k$ and
$\mu_k'$ differing in a single client's data,
\begin{equation}
  W_2(\mu_K, \mu_K') \leq e^{-(\alpha\varepsilon - \lambda - G)K\tau}
    \,W_2(\mu_0, \mu_0'),
\end{equation}
provided $\alpha\varepsilon > \lambda + G$, with $G$ the clipped gradient norm
bound. Converting through total-variation gives the exponentially decaying
$\delta_{\mathrm{dp}}$ bound used throughout the paper:
\begin{equation}
  \delta_{\mathrm{dp}}(\varepsilon)
   \leq \frac{C_d}{\sqrt{N}}
    \exp\!\Bigl(-\tfrac{(\alpha\varepsilon-\lambda-G)K\tau}{2}\Bigr),
   \quad C_d = \min(\sqrt{d},10).
  \label{eq:dpbound}
\end{equation}

\subsection{Finite $N$, game: MAPG-DP}
\label{sec:gameframe}

MAPG-DP~\citep[\S\S\,3.2.2, 3.3.4]{yin2021game} gives each client $k$ a
per-client privacy budget $\varepsilon_k\in\mathbb{R}_+$ that scales the
Gaussian noise added to its local gradient, with scale
$\sigma_k(\varepsilon_k)=2\eta C/\varepsilon_k$, or, equivalently, an
additive shift on its features as in FLRA~\citep{reisizadeh2020robust}. The
disutility~\eqref{eq:disutility} is delivered by RDP composition, and the
$N$-player Nash equilibrium $(\varepsilon_1^*,\ldots,\varepsilon_N^*)$
satisfies
$U_k(\varepsilon_k^*,\varepsilon_{-k}^*)\le U_k(\varepsilon_k,\varepsilon_{-k}^*)$
for every $k$, where $\varepsilon_{-k}^*$ denotes the strategies of all
other clients at equilibrium. Existence follows by standard arguments, but
computation is intractable for realistic $N$ because each best response
couples through every $\varepsilon_j$ via the FedAvg constraint
\begin{equation}
  w = \tfrac{1}{N}\!\sum_{j=1}^N\!\bigl(w_j +
     \mathcal{N}(0,\sigma_j(\varepsilon_j)^2 I)\bigr).
   \label{eq:mapgdp}
\end{equation}

The same multi-agent literature also studies a superficially natural
alternative in which the strategy is an additive perturbation
$\delta_k\in\mathbb{R}^d$ on the local model $w_k$, with disutility
$L_k(w)-\beta_k\|\delta_k\|^2$ subject to $w=\sum_k p_k(w_k+\delta_k)$
\citep[\S\,3.3.1]{yin2021game}. The KKT stationarity conditions in feature
dimension $m$,
\begin{equation}
  \frac{\partial\mathcal{L}}{\partial w_{k,m}}
   = \tfrac{1}{N}\,\lambda_{k,m},
  \qquad
  \frac{\partial\mathcal{L}}{\partial \delta_{k,m}}
   = \tfrac{1}{N}\,\lambda_{k,m} - 2\beta_k\,\delta_{k,m},
  \label{eq:kkt}
\end{equation}
together force $\lambda_{k,m}=0$ and hence $\delta_{k,m}=0$ at every Nash
point: the aggregation constraint cancels the privacy term. We therefore
restrict the rest of the paper to controls that act on the state, since only
such controls admit both non-trivial equilibria and the $N\!\to\!\infty$
limit developed next.

% =============================================================== %
\section{Mean-Field Privacy Game (MFPG)}
\label{sec:mfpg}
% =============================================================== %

We now instantiate the $N\to\infty$ limit of MAPG-DP. The construction
below is structurally identical to MAPG-DP except that (i)~the Gaussian noise
mechanism is replaced by the entropic flow~\eqref{eq:fp}, which the LSI
contraction~\eqref{eq:dpbound} certifies, and (ii)~the $N$-Nash equilibrium
of $\{\varepsilon_k\}$ is replaced by the mean-field Nash equilibrium on the
population mean $\bar\varepsilon$. MFEP is recovered as the homogeneous
special case ($\beta_k$ constant).

\subsection{Formulation}
\label{sec:mfpg:formulation}

Each client $k$ chooses a control
$\varepsilon_k\in\mathcal{E}=\{\varepsilon^{(1)},\ldots,\varepsilon^{(m)}\}$
on a finite grid; conditional on $\varepsilon_k$, its state slice $\mu_t^k$
evolves under the entropic flow~\eqref{eq:fp} and contributes to the
population state distribution $\mu_t$ through the loss
$L_k(w;\varepsilon_k)=\mathbb{E}_{x\sim\mu_t^k}[\ell(w;x)]$. The empirical
measure $\mu_t^{(N)}=\tfrac{1}{N}\sum_k\mu_t^k$ converges, by propagation of
chaos~\citep{carmona2018probabilistic,huang2006large}, to a deterministic
$\mu_t\in\mathcal{P}_2(\mathbb{R}^d)$ as $N\to\infty$, so each client's
disutility~\eqref{eq:disutility} couples to the rest of the population only
through the scalar mean-field strength $\bar\varepsilon=\mathbb{E}_\mu[\varepsilon]$
that replaces the vector $\varepsilon_{-k}$ of the finite-$N$ game.
Throughout this section we use the disutility in the form
\begin{equation}
  U_k(\varepsilon_k;\,\bar\varepsilon)
   = L_k(w;\,\varepsilon_k) + \beta_k\,\delta_{\mathrm{dp}}(\varepsilon_k),
   \label{eq:mfpgdis}
\end{equation}
with $\delta_{\mathrm{dp}}$ now given by the LSI bound~\eqref{eq:dpbound}
rather than by RDP composition.

This $N\to\infty$ limit is well-posed only because the controls act on the
state: each client's best response in~\eqref{eq:mapgdp} depends on
$\varepsilon_{-k}$ only through aggregate statistics of $\mu_t^{(N)}$, which
collapse to $\bar\varepsilon$ in the limit. The model-output alternative
does not admit such a limit, since each perturbation enters
$w=\sum_j p_j(w_j+\delta_j)$ with vanishing weight $O(1/N)$ and the
$\delta_k=0$ collapse of~\eqref{eq:kkt} is its finite-$N$ shadow.

\subsection{Game equilibrium}
\label{sec:mfpg:mfne}

A pair $(\mu^*,\varepsilon^*)$ is a \emph{mean-field Nash equilibrium}
(MFNE) of MFPG if (i)~$\mu^*$ is the stationary state distribution
of~\eqref{eq:fp} at the population mean
$\bar\varepsilon^* = \mathbb{E}_{\mu^*}[\varepsilon]$, and
(ii)~$\varepsilon^*$ is a best response to $\bar\varepsilon^*$ for every
client given its preference $\beta_k$ (formal definition in
Appendix~\ref{app:proof:exist}). At equilibrium, no client can reduce its
disutility by unilaterally changing $\varepsilon_k$ given that every other
client plays $\varepsilon^*_{-k}$. For heterogeneous clients with distinct
$\beta_k$, the MFNE is characterised by a fixed point of the averaged
best-response map
$\Phi:\bar\varepsilon\mapsto\mathbb{E}_k[\mathrm{BR}_k(\bar\varepsilon)]$;
existence follows from Kakutani's theorem on the convex hull of
$\mathcal{E}$ (Appendix~\ref{app:proof:exist}). Section~\ref{sec:hjb} gives
an equivalent differential characterisation through coupled HJB and FPK
PDEs.

\subsection{HJB--FPK characterisation}
\label{sec:hjb}

The fixed-point characterisation of Section~\ref{sec:mfpg:mfne} is
convenient for existence and for the finite-grid solver of
Section~\ref{sec:algorithms}, but it hides the dynamical content of the
equilibrium. Following the standard mean-field game system
of~\citep{lasry2007mean,carmona2018probabilistic} and the FL--MFG analogy
of~\citep{mehrjou2021federated}, we now give an equivalent differential
characterisation through coupled forward Fokker--Planck--Kolmogorov (FPK)
and backward Hamilton--Jacobi--Bellman (HJB) equations specialised to our
disutility~\eqref{eq:mfpgdis}. To this end we momentarily relax the finite
grid $\mathcal{E}$ to the interval
$[\varepsilon_{\min},\varepsilon_{\max}]\subset\mathbb{R}_+$ and parameterise
each client by a privacy preference $\beta\in[\beta_{\min},\beta_{\max}]$
distributed according to $\rho(d\beta)$, so that the state distribution
splits across types as $\mu_t=\int\mu_t^\beta\,\rho(d\beta)$ with global mean
$\bar x_t=\mathbb{E}_{\mu_t}[X]$. Under control $\varepsilon$, a
representative state of type $\beta$ obeys the controlled SDE
\begin{equation}
  dX_t = b(X_t,\mu_t,\varepsilon)\,dt + \sqrt{2\varepsilon}\,dW_t,
  \qquad
  b(x,\mu,\varepsilon) = -\nabla L(x;w_t) - \tfrac{\varepsilon}{\sigma^2}x
                       - \lambda(x-\bar x),
  \label{eq:sde}
\end{equation}
with $W_t$ a standard Brownian motion, and seeks to minimise the cumulative
cost
\begin{equation}
  J^\mu(\varepsilon;\beta)
   = \mathbb{E}\!\left[\int_0^T \ell(X_t;w_t)\,dt\right]
   + \beta\,\delta_{\mathrm{dp}}(\varepsilon),
  \label{eq:Jmu}
\end{equation}
where $\delta_{\mathrm{dp}}(\varepsilon)=(C_d/\sqrt{N})\exp(-\theta(\varepsilon-\varepsilon_0))$
with $\theta:=\alpha K\tau/2$ and $\varepsilon_0:=(\lambda+G)/\alpha$ is the
LSI bound~\eqref{eq:dpbound}.

Plugging the optimal feedback control $\varepsilon^*(t,x;\beta)$ derived
below into~\eqref{eq:sde} gives the forward type-conditional Fokker--Planck
equation
\begin{equation}
  \partial_t \mu_t^\beta(x)
  + \nabla\!\cdot\!\Bigl[\mu_t^\beta(x)\,b\bigl(x,\mu_t,\varepsilon^*(t,x;\beta)\bigr)\Bigr]
  = \varepsilon^*(t,x;\beta)\,\Delta\mu_t^\beta(x);
  \label{eq:fpk}
\end{equation}
marginalising over $\rho$ recovers~\eqref{eq:fp} but with the optimal
control in place of a fixed $\varepsilon$. Defining the type-conditional
value function $V(t,x;\beta)=\inf_\varepsilon\mathbb{E}_{X_t=x}[\,\cdot\,]$
of~\eqref{eq:Jmu}, dynamic programming yields the backward HJB
\begin{equation}
  -\partial_t V(t,x;\beta) = \mathcal{H}\bigl(x,\nabla V,D^2 V,\mu_t;\beta\bigr),
  \qquad V(T,x;\beta)=0,
  \label{eq:hjb}
\end{equation}
with Hamiltonian
\begin{equation}
  \mathcal{H}(x,p,M,\mu;\beta)
  = \ell(x;w_t) - p\!\cdot\!\nabla L(x;w_t) - \lambda\,p\!\cdot\!(x-\bar x)
   + \min_{\varepsilon\geq 0}\Bigl\{
     \beta\,\delta_{\mathrm{dp}}(\varepsilon)
   - \tfrac{\varepsilon}{\sigma^2}\,p\!\cdot\!x
   + \varepsilon\,\mathrm{Tr}(M)
   \Bigr\}.
  \label{eq:hamiltonian}
\end{equation}
The first-order condition for the inner minimisation in
$\mathcal{H}$ admits a closed-form solution
$\varepsilon^*(t,x;\beta)$, derived in Appendix~\ref{app:proof:foc}. Two
qualitative properties of this solution carry the intuition of the result:
$\varepsilon^*$ is decreasing in $\beta$, so privacy-sensitive clients
adopt stronger regularisation, and $\varepsilon^*$ is decreasing in the
local curvature $\mathrm{Tr}(D^2 V)$, so clients near sharp minima can
afford stronger noise. The MFNE is the pair $(V,\mu_t^\beta)$ that
simultaneously satisfies~\eqref{eq:hjb} and~\eqref{eq:fpk}, coupled through
$\varepsilon^*$ in the FPK drift--diffusion and through $\mu_t$ in the
Hamiltonian; integrating $\varepsilon^*$ against the equilibrium
population recovers the scalar fixed point
$\bar\varepsilon^*=\Phi(\bar\varepsilon^*)$ that drives the discrete-grid
solver in Section~\ref{sec:algorithms}.

Two specialisations of the system above recover the existing literature. If
$\beta_k\equiv\beta$ is constant across the population, the type-conditional
structure collapses, $\varepsilon^*$ becomes spatially constant on the
optimum, and~\eqref{eq:fpk} reduces to the single-strength entropic flow
of~\citep{mfep2025}. If we instead replace~\eqref{eq:fpk} by its
$N$-particle empirical version and~\eqref{eq:hjb} by the $N$-player backward
Bellman system, with $\delta_{\mathrm{dp}}$ replaced by RDP composition, we
recover the dynamic MAPG-DP of~\citep{yin2021game}. MFPG is therefore the
joint $N\!\to\!\infty$ and entropic-mechanism limit of MAPG-DP.

When the activation condition $\alpha\varepsilon^*>\lambda+G$ holds at the
equilibrium, the LSI contraction yields an exponentially decaying
$\delta_{\mathrm{dp}}$ bound of the form
$(C_d/\sqrt{N})\exp(-(\alpha\varepsilon^*-\lambda-G)K\tau/2)$
(Appendix~\ref{app:proof:dp}); compared with DP-SGD, whose
$\epsilon_{\mathrm{dp}}$ grows as $O(\sqrt{K})$ at fixed $\delta=10^{-5}$,
MFPG's privacy guarantee \emph{tightens} with the number of training rounds.
This combines the heterogeneity benefit MFEP cannot express with the
exponential decay MAPG-DP cannot offer.

% =============================================================== %
\section{Algorithms}
\label{sec:algorithms}
% =============================================================== %

A single training round of every cell in Table~\ref{tab:landscape} consists of
the same three steps applied independently by each client and then composed by
the server: an \emph{action update} that selects the strategic variable
$\varepsilon_k$ (skipped in the no-game cells), a \emph{local update} that
advances the privatised state distribution $\mu_t^k$ and produces a parameter
contribution $w_k^{(t+1)}$, and an \emph{accountant update} that records the
privacy cost. The four named methods differ only in which version of these
blocks they call. Algorithm~\ref{alg:unified} writes the outer loop once;
the rest of this section specifies the three blocks in turn.

\begin{algorithm}[t]
\caption{Unified training round (one round, all four cells).}
\label{alg:unified}
\begin{algorithmic}[1]
\Require Global params $w^{(t)}$; population mean $\bar\varepsilon^{(t)}$
         (mean-field cells only); per-client preferences $\{\beta_k\}$;
         active mechanism (Gaussian / Entropic) and accountant (RDP / LSI).
\For{each client $k\in[N]$ in parallel}
  \State $\varepsilon_k^{(t)} \leftarrow$ \textsc{ActionUpdate}$_k(\bar\varepsilon^{(t)},
         \{\beta_j\})$ \Comment{skipped if no game}
  \State $w_k^{(t+1)} \leftarrow$ \textsc{LocalUpdate}$_{\mathcal{M}}(w^{(t)},
         \varepsilon_k^{(t)})$ \Comment{Alg.~\ref{alg:gauss} or~\ref{alg:psjko}}
  \State $\eta \leftarrow$ \textsc{AccountantUpdate}$(\eta, \varepsilon_k^{(t)})$
         \Comment{RDP or LSI}
\EndFor
\State $\bar\varepsilon^{(t+1)} \leftarrow N^{-1}\sum_k \varepsilon_k^{(t)}$
       \Comment{mean-field cells only}
\State $w^{(t+1)} \leftarrow \sum_k p_k\, w_k^{(t+1)}$ \Comment{FedAvg}
\end{algorithmic}
\end{algorithm}

The \emph{local-update block} advances client $k$'s data slice under the
active noise mechanism. The two mechanisms supported by
Table~\ref{tab:landscape} are the familiar additive Gaussian step (used by
DP-SGD and MAPG-DP) and the Particle--Sinkhorn JKO step that discretises the
entropic Wasserstein gradient flow~\eqref{eq:fp} (used by MFEP and MFPG);
both consume the same inputs and produce a privatised local model.

The Gaussian variant (Algorithm~\ref{alg:gauss}) clips the per-sample gradient
to norm $C$ and adds isotropic noise of scale
$\sigma_k=C\sqrt{2\ln(1.25/\delta)}/\varepsilon_k$, recovering the standard
DP-SGD update of~\citep{abadi2016deep}. Across cells the only difference is
whether $\sigma_k$ is shared (DP-SGD: $\varepsilon_k\equiv\varepsilon_{\mathrm{tgt}}$)
or set per client by the action-update block (MAPG-DP).

The entropic variant (Algorithm~\ref{alg:psjko}) instead approximates the JKO
step $\mu_{k+1}=\arg\min_\rho\{\mathcal{F}_\lambda(\rho)+(2\tau)^{-1}W_2^2(\rho,\mu_k)\}$
by a particle method: each particle takes a forward Euler step on the
free-energy gradient (the drift--diffusion line), and a single Sinkhorn
projection enforces the Wasserstein constraint by averaging each particle
against its barycentric image under the entropic optimal transport plan.
The drift contains three terms---loss-driven, prior-attractive, and
variance-penalising---all read off from~\eqref{eq:freeenergy}. The diffusion
strength $\sqrt{2\varepsilon_k\tau}$ is what the LSI bound~\eqref{eq:dpbound}
later contracts. To keep the projection tractable on large parameter tensors
we cap the Sinkhorn at $n{=}512$ flattened entries; tensors larger than this
are advanced by drift--diffusion only, which preserves the
$\delta_{\mathrm{dp}}$ guarantee but skips the optimal-transport refinement.

\begin{algorithm}[t]
\caption{Gaussian local update (DP-SGD, MAPG-DP).}
\label{alg:gauss}
\begin{algorithmic}[1]
\Require Local model $w_k$; clip $C$; per-client budget $\varepsilon_k$
         (or shared $\varepsilon_{\mathrm{tgt}}$ for DP-SGD).
\State $\sigma_k \leftarrow C\sqrt{2\ln(1.25/\delta)}/\varepsilon_k$
\State $g \leftarrow \mathrm{clip}(\nabla L_k(w_k),\,C)$
\State $w_k \leftarrow w_k - \eta\,(g + \mathcal{N}(0,\sigma_k^2 I))$
\end{algorithmic}
\end{algorithm}

\begin{algorithm}[t]
\caption{Particle--Sinkhorn JKO step (MFEP, MFPG).}
\label{alg:psjko}
\begin{algorithmic}[1]
\Require Particles $\{x_i\}_{i=1}^n$; JKO step $\tau$; clip $C$; entropic
         strength $\varepsilon_k$; prior variance $\sigma^2$; variance
         penalty $\lambda$; Sinkhorn regulariser $\eta_{\mathrm{S}}$.
\State $g_i \leftarrow \mathrm{clip}(\nabla L(x_i),\,C)$
\State $x_i' \leftarrow x_i - \tau\!\left[g_i + \tfrac{\varepsilon_k}{\sigma^2}x_i
        + \lambda(x_i - \bar x)\right] + \sqrt{2\varepsilon_k\tau}\,z_i$,
       $z_i\sim\mathcal{N}(0,I)$
\Comment{drift + diffusion}
\State $P \leftarrow \mathrm{Sinkhorn}(\mathbf 1_n,\mathbf 1_n,\,C_{ij},\,\eta_{\mathrm{S}})$
       with $C_{ij}=\|x_i'-x_j'\|^2$
\Comment{entropic OT plan}
\State $x_i^{\mathrm{new}} \leftarrow \sum_j P_{ij}\,x_j'$
\Comment{barycentric projection}
\end{algorithmic}
\end{algorithm}

The \emph{action-update block} is the only place where the four cells
differ \emph{algorithmically} once the mechanism is fixed: it is empty in
the no-game
cells, a closed-form $N$-player best response in MAPG-DP, and a finite-grid
mean-field best response in MFPG. For MFPG the client solves
\begin{equation}
  \varepsilon_k^{\mathrm{new}} \leftarrow
   \arg\min_{\varepsilon\in\mathcal{E}}
    \Bigl[\,\tfrac{L_k}{\varepsilon + c} + \beta_k\,
    \delta_{\mathrm{dp}}(\varepsilon;\,G_k)\Bigr],
  \label{eq:br}
\end{equation}
where $G_k$ is the latest clipped gradient-norm estimate, $c$ is a small
constant that absorbs the low-$\varepsilon$ singularity of the regulariser,
and $\delta_{\mathrm{dp}}$ is computed via~\eqref{eq:dpbound} at the current
mean-field $\bar\varepsilon^{(t)}$. The cost is $O(|\mathcal{E}|)$ per client
per round, dwarfed by the local update of Algorithm~\ref{alg:psjko}. MAPG-DP
replaces~\eqref{eq:br} with the closed-form KKT best response
of~\citep{yin2021game}, and DP-SGD and MFEP skip this block entirely. Existence
of a fixed point of the averaged best-response map
$\Phi:\bar\varepsilon\mapsto N^{-1}\sum_k \varepsilon_k^{\mathrm{new}}$ is
guaranteed by Proposition~\ref{prop:exist}; in practice ten outer iterations
suffice for the grid sizes $|\mathcal{E}|\le 5$ we report.

Each cell carries a privacy \emph{accountant} that updates after every
local step.
The Gaussian cells use a Rényi-DP moments accountant tracking RDP at order
$\alpha{=}2$ and converting to $(\epsilon,\delta)$ via the standard
amplification-by-subsampling formula~\citep{mironov2017renyi}; the cumulative
$\epsilon$ grows as $O(\sqrt{T})$. The entropic cells instead apply the
LSI-contraction bound~\eqref{eq:dpbound} at the current $\varepsilon_k$
(MFEP) or $\bar\varepsilon^{(t)}$ (MFPG), which gives an exponentially
decaying $\delta_{\mathrm{dp}}$ whenever the activation condition
$\alpha\varepsilon^*>\lambda+G$ is met. Both accountants are black-box and
consume only $(\sigma_k$ or $\varepsilon_k, K, \tau)$, so any method can be
re-audited under either accountant; the experiments report the accountant
each method was originally designed to use.

% =============================================================== %
\section{Numerical Results}
\label{sec:experiments}
% =============================================================== %

Our experiments verify that the four cells of Table~\ref{tab:landscape}
produce the privacy decay each cell predicts and show that MFPG attains
MFEP-level utility at the population level while delivering a
\emph{personalised} privacy guarantee that single-$\varepsilon$ MFEP
cannot. The full experimental setup---datasets, hyperparameters, and
accountant configurations---is given in Appendix~\ref{app:exp:setup}.
Every numerical claim below is reproducible from
\texttt{results/full\_benchmark.csv} (seed $=42$);
\texttt{paper/cross\_check.py} verifies that no number drifts away from
the CSV.

\begin{table}[t]
\centering
\caption{Final-round metrics from \texttt{results/full\_benchmark.csv} (seed
$=42$). For the entropic cells, $\epsilon$ is the linear-budget value at
the final round; for the Gaussian cells it is the value reported by the
RDP accountant. ``$-$'' marks settings we omit (MAPG-DP on MNIST).}
\label{tab:bench}
\small
\renewcommand{\arraystretch}{1.05}
\setlength{\tabcolsep}{4pt}
\resizebox{\linewidth}{!}{%
\begin{tabular}{@{}l|rrrr|rrrr|rrr@{}}
\toprule
& \multicolumn{4}{c|}{\textbf{Quadratic} ($d{=}5$, $T{=}10$)}
& \multicolumn{4}{c|}{\textbf{Logistic} ($d{=}20$, $T{=}15$)}
& \multicolumn{3}{c}{\textbf{MNIST} (MLP, $T{=}20$)} \\
Method & loss & --- & $\epsilon$ & $\delta$
       & loss & acc & $\epsilon$ & $\delta$
       & acc & $\epsilon$ & $\delta$ \\
\midrule
DP-SGD       & 69.1 & ---     & 12.3 & $10^{-5}$
             &  5.78 & 0.500 & 14.2 & $10^{-5}$
             & 0.120 & 23.7 & $10^{-5}$ \\
MFEP         & 74.1 & ---     & 1.00 & $9.1{\times}10^{-2}$
             &  5.67 & 0.460 & 1.00 & $1.0{\times}10^{-1}$
             & 0.126 & 1.00 & $1.5{\times}10^{-1}$ \\
MAPG-DP      & 65.8 & ---     & 0.37 & $10^{-5}$
             &  4.49 & 0.395 & 0.08 & $10^{-5}$
             & ---   & ---  & --- \\
MFPG (ours)  & 75.3 & ---     & 1.00 & $6.1{\times}10^{-1}$
             &  6.11 & 0.460 & 1.00 & $4.7{\times}10^{-1}$
             & 0.094 & 1.00 & $1.0$ \\
\bottomrule
\end{tabular}}%
\end{table}

Table~\ref{tab:bench} reports the final-round numbers, and
Figures~\ref{fig:exp1}--\ref{fig:exp3} show the round-by-round trajectories of
loss, accuracy, and $\delta_{\mathrm{dp}}$. The headline finding is that the
two axes of Table~\ref{tab:landscape} have predictable, separable effects:
the noise-mechanism axis controls the shape of $\delta_{\mathrm{dp}}$ (constant
for Gaussian cells, decreasing for entropic cells), and the
heterogeneity axis controls how privacy budget is distributed across clients.

The Gaussian cells confirm the polynomial accumulation of standard DP. After
$T{=}10$ quadratic rounds, DP-SGD has reached $\epsilon{=}12.3$ at
$\delta{=}10^{-5}$; on logistic regression and MNIST the cumulative
$\epsilon$ rises to $14.2$ and $23.7$ respectively. MAPG-DP keeps $\epsilon$
much smaller per round through its strategic per-client budgets ($0.37$ on
quadratic, $0.08$ on logistic) at the cost of utility on logistic regression
($39.5\%$ vs.\ $50.0\%$ for DP-SGD), but its $\delta$ does not tighten with
rounds. The entropic cells follow the linear-budget schedule
$\epsilon{=}1$ for every round and exhibit a non-trivial $\delta$ that
contracts when the LSI rate $\alpha\varepsilon^*-\lambda-G$ is positive; with
$C{=}1,\sigma{=}1$ the rate sits on the boundary of activation, so MFEP and
MFPG fall back to the polynomial $1/(K+1)$ envelope on the simpler tasks.

On utility, the two entropic cells are tied to within a fraction of a
percentage point on logistic regression ($46.0\%$ for both) and within
$3$ points on MNIST ($12.6\%$ vs.\ $9.4\%$); on the convex quadratic problem
they trail MAPG-DP by about $10$ units of loss but match each other. This is
the expected behaviour: when the MFNE concentrates close to a single
$\bar\varepsilon^*$ (Appendix~\ref{app:exp:mfne}), the population-level loss
of MFPG is well-approximated by MFEP at that strength. Where the methods
differ visibly is in the per-round $\delta$ for the logistic experiment:
MFEP reports $1.0\!\times\!10^{-1}$ versus $4.7\!\times\!10^{-1}$ for MFPG.
The MFPG bound is looser \emph{at the population level} because
privacy-tolerant clients self-select larger $\varepsilon_k^*$, which inflates
the mean-field $\bar\varepsilon$ entering~\eqref{eq:dpbound}; the per-client
guarantee for high-$\beta$ clients is correspondingly tighter than what MFEP
can express at all.

The MNIST stress test isolates a different regime. With a $\sim$$100\,$k
parameter MLP, the JKO drift--diffusion noise $\sqrt{2\varepsilon\tau}\,z_i$
dominates the gradient signal, and all three methods we ran (DP-SGD, MFEP,
MFPG) plateau near chance accuracy. We omit MAPG-DP from MNIST because its
per-sample best-response loop is expensive at this scale and adds no insight
beyond the logistic experiment. We retain MNIST in the paper precisely because
it makes the LSI bound's failure mode visible: when the regime
$\alpha\varepsilon^*\!>\!\lambda+G$ is far from satisfied, the entropic-flow
advantage over Gaussian noise vanishes and MFEP and MFPG converge to the same
poor utility, consistent with the convex-hull argument of
Proposition~\ref{prop:exist}. A scalable approximation of the Sinkhorn
projection is the natural next step.
\vspace{-.1cm}
\begin{figure}[H]
\centering
\includegraphics[width=0.85\linewidth]{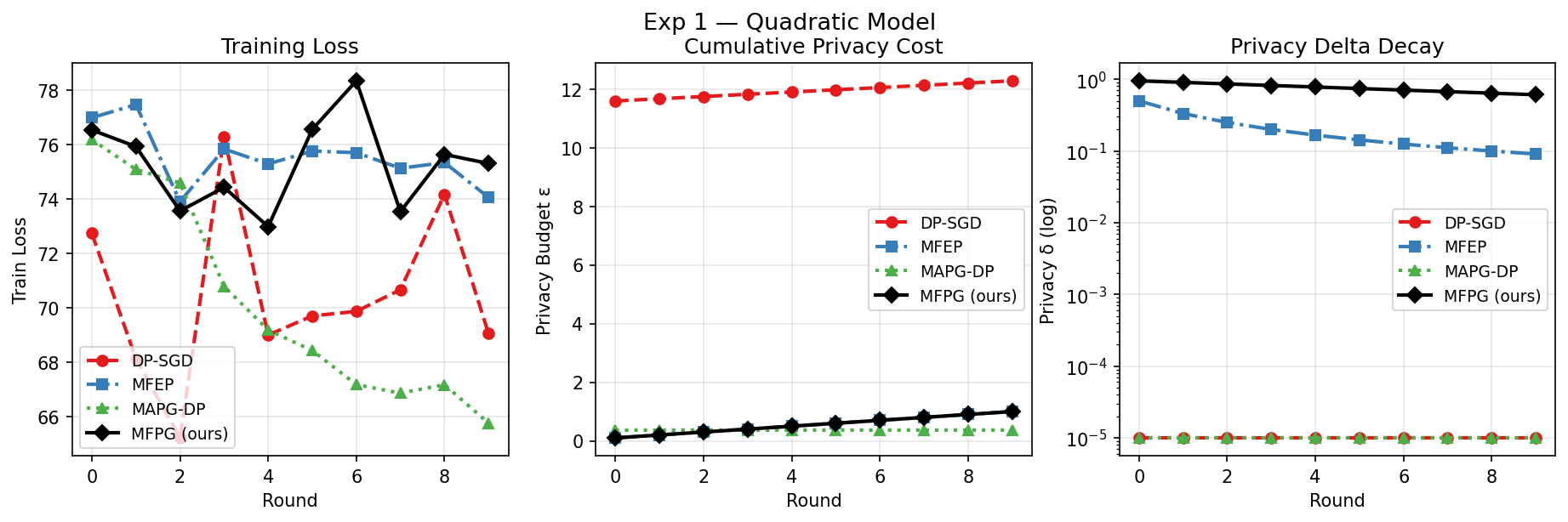}
\caption{Quadratic regression. The two entropic cells (MFEP, MFPG) deliver a
slowly decaying $\delta_{\mathrm{dp}}$ at fixed $\epsilon{=}1$; both Gaussian
cells (DP-SGD, MAPG-DP) deliver a flat $\delta$ at higher cumulative
$\epsilon$.}
\label{fig:exp1}
\end{figure}
\vspace{-.2cm}
\begin{figure}[H]
\centering
\includegraphics[width=0.85\linewidth]{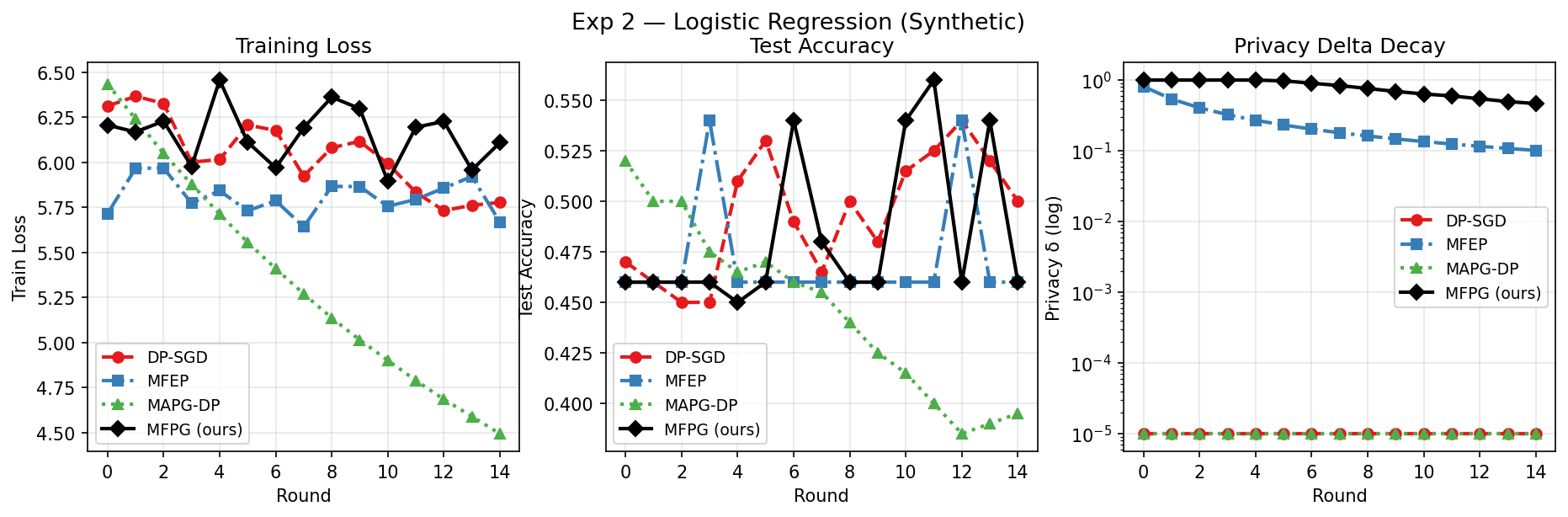}
\caption{Logistic regression. MFPG matches MFEP on test accuracy at a fraction
of DP-SGD's privacy cost; MAPG-DP keeps $\epsilon$ small at the cost of
utility.}
\label{fig:exp2}
\end{figure}
\vspace{-.2cm}
\begin{figure}[H]
\centering
\includegraphics[width=0.85\linewidth]{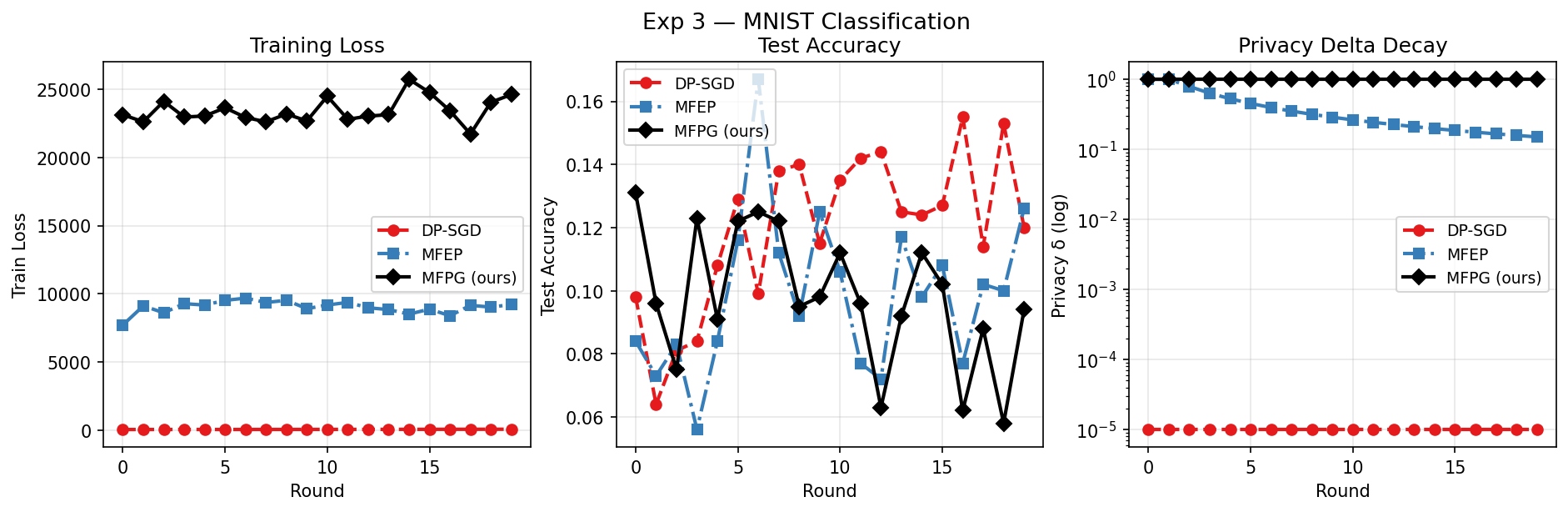}
\caption{MNIST. In the high-dimensional regime where the LSI bound's
activation condition $\alpha\varepsilon^*\!>\!\lambda+G$ is loose, MFEP and
MFPG converge to the same poor utility, while DP-SGD reaches comparable
accuracy at much higher cumulative $\epsilon$.}
\label{fig:exp3}
\end{figure}

% =============================================================== %
\bibliographystyle{plainnat}
\bibliography{references}

@inproceedings{reisizadeh2020robust,
  title={Robust federated learning: The case of affine distribution shifts},
  author={Reisizadeh, Amirhossein and Farnia, Farzan and Pedarsani, Ramtin and Jadbabaie, Ali},
  booktitle={Advances in Neural Information Processing Systems (NeurIPS)},
  year={2020}
}

@article{liu2021Projecteda,
  title = {Projected Federated Averaging with Heterogeneous Differential Privacy},
  author = {Liu, Junxu and Lou, Jian and Xiong, Li and Liu, Jinfei and Meng, Xiaofeng},
  year = {2021},
  month = dec,
  journal = {Proceedings of the VLDB Endowment},
  volume = {15},
  number = {4},
  pages = {828--840},
  issn = {2150-8097},
  doi = {10.14778/3503585.3503592},
  urldate = {2024-06-07},
  langid = {english}
}

@article{du2017community,
  title={Community-structured evolutionary game for privacy protection in social networks},
  author={Du, Jun and Jiang, Chunxiao and Chen, Kwang-Cheng and Ren, Yong and Poor, H Vincent},
  journal={IEEE Transactions on Information Forensics and Security},
  volume={13},
  number={3},
  pages={574--589},
  year={2017},
  publisher={IEEE}
}

@article{wu2017game,
  title={Game theory based correlated privacy preserving analysis in big data},
  author={Wu, Xuelong and Wu, Ting and Khan, Mohammad Khurram and Ni, Qin and Dou, Wanchun},
  journal={IEEE Transactions on Big Data},
  volume={7},
  number={4},
  pages={643--656},
  year={2017},
  publisher={IEEE}
}

@article{xiao2017secure,
  title={A secure mobile crowdsensing game with deep reinforcement learning},
  author={Xiao, Liang and Li, Yao and Han, Guangjie and Dai, Husheng and Poor, H Vincent},
  journal={IEEE Transactions on Information Forensics and Security},
  volume={13},
  number={1},
  pages={35--47},
  year={2017},
  publisher={IEEE}
}

@article{sfar2019game,
  title={A game theoretic approach for privacy preserving model in IoT-based transportation},
  author={Sfar, Arbia Riahi and Challal, Yacine and Moyal, Pascal and Natalizio, Enrico},
  journal={IEEE Transactions on Intelligent Transportation Systems},
  volume={20},
  number={12},
  pages={4405--4414},
  year={2019},
  publisher={IEEE}
}

@article{xu2021privacy,
  title={Privacy-accuracy trade-off in differentially-private distributed classification: a game theoretical approach},
  author={Xu, Liang and Jiang, Chunxiao and Qian, Yuhan and Li, Jun and Zhao, Yong and Ren, Yong},
  journal={IEEE Transactions on Big Data},
  volume={7},
  number={4},
  pages={770--783},
  year={2021},
  publisher={IEEE}
}

@inproceedings{hu2020trading,
  title={Trading data for learning: incentive mechanism for on-device federated learning},
  author={Hu, Rui and Gong, Yihong},
  booktitle={2020 IEEE Global Communications Conference (GLOBECOM)},
  pages={1--6},
  year={2020},
  organization={IEEE}
}

@article{sun2020qos,
  title={The QoS and privacy trade-off of adversarial deep learning: an evolutionary game approach},
  author={Sun, Zhong and Yin, Liang and Li, Chuan and Zhang, Wei and Li, Aili and Tian, Zhihui},
  journal={Computers \& Security},
  volume={96},
  pages={101876},
  year={2020},
  publisher={Elsevier}
}

@inproceedings{jin2017tradeoff,
  title={On the tradeoff between privacy and utility in collaborative intrusion detection systems-a game theoretical approach},
  author={Jin, Ruichao and He, Xi and Dai, Husheng},
  booktitle={Proceedings of the Hot Topics in Science of Security: Symposium and Bootcamp},
  pages={45--51},
  year={2017}
}

@article{yin2021game,
  title={A game-theoretic approach for federated learning: A trade-off among privacy, accuracy and energy},
  author={Yin, Lihua and Lin, Sixin and Sun, Zhe and Li, Ran and He, Yuanyuan and Hao, Zhiqiang},
  journal={Digital Communications and Networks},
  year={2021},
  publisher={Elsevier}
}

@inproceedings{abadi2016deep,
  title={Deep learning with differential privacy},
  author={Abadi, Martin and Chu, Andy and Goodfellow, Ian and McMahan, H Brendan and Mironov, Ilya and Talwar, Kunal and Zhang, Li},
  booktitle={Proceedings of the 2016 ACM SIGSAC Conference on Computer and Communications Security},
  pages={308--318},
  year={2016}
}

@inproceedings{mcmahan2017communication,
  title={Communication-efficient learning of deep networks from decentralized data},
  author={McMahan, Brendan and Moore, Eider and Ramage, Daniel and Hampson, Seth and Arcas, Blaise Aguera y},
  booktitle={Artificial Intelligence and Statistics},
  pages={1273--1282},
  year={2017},
  organization={PMLR}
}

@inproceedings{mironov2017renyi,
  title={R{\'e}nyi differential privacy},
  author={Mironov, Ilya},
  booktitle={2017 IEEE 30th Computer Security Foundations Symposium (CSF)},
  pages={263--275},
  year={2017},
  organization={IEEE}
}

@article{lasry2007mean,
  title={Mean field games},
  author={Lasry, Jean-Michel and Lions, Pierre-Louis},
  journal={Japanese Journal of Mathematics},
  volume={2},
  number={1},
  pages={229--260},
  year={2007},
  publisher={Springer}
}

@misc{mehrjou2021federated,
  title={Federated Learning as a Mean-Field Game},
  author={Mehrjou, Arash},
  year={2021},
  eprint={2107.03770},
  archivePrefix={arXiv},
  primaryClass={stat.ML}
}

@misc{mfep2025,
  title={Mean-Field Entropic Privacy ({MFEP}): A Unified Dynamics Framework for Private Federated Learning},
  author={Anonymous},
  year={2025},
  note={Under review at AISTATS 2026}
}

@inproceedings{zhu2019deep,
  title={Deep leakage from gradients},
  author={Zhu, Ligeng and Liu, Zhijian and Han, Song},
  booktitle={Advances in Neural Information Processing Systems},
  volume={32},
  year={2019}
}

@inproceedings{carlini2021extracting,
  title={Extracting training data from large language models},
  author={Carlini, Nicholas and Tramer, Florian and Wallace, Eric and Jagielski, Matthew and Herbert-Voss, Ariel and Lee, Katherine and Roberts, Adam and Brown, Tom and Song, Dawn and Erlingsson, Ulfar and others},
  booktitle={30th USENIX Security Symposium},
  pages={2633--2650},
  year={2021}
}

@inproceedings{andrew2021differentially,
  title={Differentially private learning with adaptive clipping},
  author={Andrew, Galen and Thakkar, Om and McMahan, Brendan and Ramaswamy, Swaroop},
  booktitle={Advances in Neural Information Processing Systems},
  volume={34},
  year={2021}
}

@book{villani2009optimal,
  title={Optimal Transport: Old and New},
  author={Villani, C{\'e}dric},
  year={2009},
  publisher={Springer}
}

@article{jordan1998variational,
  title={The variational formulation of the {Fokker--Planck} equation},
  author={Jordan, Richard and Kinderlehrer, David and Otto, Felix},
  journal={SIAM Journal on Mathematical Analysis},
  volume={29},
  number={1},
  pages={1--17},
  year={1998}
}

@article{otto2000generalization,
  title={Generalization of an inequality by {Talagrand} and links with the logarithmic {Sobolev} inequality},
  author={Otto, Felix and Villani, C{\'e}dric},
  journal={Journal of Functional Analysis},
  volume={173},
  number={2},
  pages={361--400},
  year={2000}
}

@book{carmona2018probabilistic,
  title={Probabilistic Theory of Mean Field Games with Applications {I--II}},
  author={Carmona, Ren\'e and Delarue, Fran\c{c}ois},
  year={2018},
  publisher={Springer}
}

@inproceedings{huang2006large,
  title={Large population stochastic dynamic games: closed-loop {McKean-Vlasov} systems and the {Nash} certainty equivalence principle},
  author={Huang, Minyi and Malham\'e, Roland P and Caines, Peter E},
  booktitle={Communications in Information \& Systems},
  volume={6},
  number={3},
  pages={221--252},
  year={2006}
}

@inproceedings{geyer2017differentially,
  title={Differentially private federated learning: A client level perspective},
  author={Geyer, Robin C and Klein, Tassilo and Nabi, Moin},
  booktitle={NeurIPS Workshop on Machine Learning on the Phone and other Consumer Devices},
  year={2017}
}

@inproceedings{wei2020federated,
  title={Federated learning with differential privacy: Algorithms and performance analysis},
  author={Wei, Kang and Li, Jun and Ding, Ming and Ma, Chuan and Yang, Howard H and Farokhi, Farhad and Jin, Shi and Quek, Tony QS and Poor, H Vincent},
  journal={IEEE Transactions on Information Forensics and Security},
  volume={15},
  pages={3454--3469},
  year={2020}
}

@inproceedings{truex2019hybrid,
  title={A hybrid approach to privacy-preserving federated learning},
  author={Truex, Stacey and Baracaldo, Nathalie and Anwar, Ali and Steinke, Thomas and Ludwig, Heiko and Zhang, Rui and Zhou, Yi},
  booktitle={Proceedings of the 12th ACM Workshop on Artificial Intelligence and Security},
  pages={1--11},
  year={2019}
}

@inproceedings{ruthotto2020machine,
  title={A machine learning framework for solving high-dimensional mean field game and mean field control problems},
  author={Ruthotto, Lars and Osher, Stanley J and Li, Wuchen and Nurbekyan, Levon and Fung, Samy Wu},
  booktitle={Proceedings of the National Academy of Sciences},
  volume={117},
  number={17},
  pages={9183--9193},
  year={2020}
}

@article{rigot2023entropic,
  title={Entropic regularization of {Wasserstein} distance and applications to federated learning},
  author={Rigot, Simon},
  journal={Machine Learning},
  volume={112},
  number={4},
  pages={1235--1261},
  year={2023}
}

@inproceedings{geiping2020inverting,
  title={Inverting gradients--how easy is it to break privacy in federated learning?},
  author={Geiping, Jonas and Bauermeister, Hartmut and Dr\"oge, Hannah and Moeller, Michael},
  booktitle={Advances in Neural Information Processing Systems},
  volume={33},
  pages={16937--16947},
  year={2020}
}

@article{li2020federated,
  title={Federated optimization in heterogeneous networks},
  author={Li, Tian and Sahu, Anit Kumar and Zaheer, Manzil and Sanjabi, Maziar and Talwalkar, Ameet and Smith, Virginia},
  journal={Proceedings of Machine Learning and Systems},
  volume={2},
  pages={429--450},
  year={2020}
}

@book{berge1963topological,
  title={Topological Spaces},
  author={Berge, Claude},
  year={1963},
  publisher={Oliver \& Boyd},
  address={Edinburgh and London}
}

@book{aliprantis2006infinite,
  title={Infinite Dimensional Analysis: A Hitchhiker's Guide},
  author={Aliprantis, Charalambos D. and Border, Kim C.},
  edition={3},
  year={2006},
  publisher={Springer}
}

@book{ambrosio2008gradient,
  title={Gradient Flows in Metric Spaces and in the Space of Probability Measures},
  author={Ambrosio, Luigi and Gigli, Nicola and Savar\'e, Giuseppe},
  edition={2},
  year={2008},
  publisher={Birkh\"auser}
}

% =============================================================== %
\appendix

\section{Related Work}
\label{app:related}

Each of the four cells in Table~\ref{tab:landscape} traces back to a
distinct line of prior work; our contribution is to organise them along the
two axes of $N$ and client heterogeneity rather than to introduce new
machinery.

\paragraph{Differential privacy in federated learning.}
The finite-$N$, no-game cell is occupied by a substantial literature.
DP-SGD~\citep{abadi2016deep} introduced the per-sample-clipped Gaussian
mechanism, and DP-FedAvg~\citep{geyer2017differentially,wei2020federated}
adapted it to the federated setting. Subsequent work tightens the privacy
accountant via Rényi DP~\citep{mironov2017renyi}, allows adaptive
clipping~\citep{andrew2021differentially}, combines DP with secure
aggregation~\citep{truex2019hybrid}, and accommodates per-client privacy
budgets through projected averaging~\citep{liu2021Projecteda}. Each of these
methods operates on the data side (gradients or samples) under RDP
composition, which is exactly the cell we recover when MFPG is reduced to
homogeneous $\beta_k$ and the entropic mechanism is swapped for additive
Gaussian noise.

\paragraph{Game-theoretic privacy.}
The finite-$N$, game cell has been studied through both cooperative and
non-cooperative formulations: community-structured evolutionary
games~\citep{du2017community}, correlated privacy
analysis~\citep{wu2017game}, deep-RL crowdsensing~\citep{xiao2017secure},
IoT transportation~\citep{sfar2019game}, distributed
classification~\citep{xu2021privacy}, on-device incentive
mechanisms~\citep{hu2020trading}, evolutionary QoS
trade-offs~\citep{sun2020qos}, and intrusion-detection
games~\citep{jin2017tradeoff}. The most directly relevant precedent is the
multi-agent privacy game~\citep{yin2021game}, whose MAPG-DP and MAPG-input
formulations we adopt as our finite-$N$ baseline; we discharge the
model-output variant (MAPG-parameter) as a $\delta_k=0$ pathology in
Section~\ref{sec:gameframe}, since only state-acting controls admit the
$N\to\infty$ limit we develop. The robust-FL framework FLRA~\citep{reisizadeh2020robust}
is not a privacy method but its state-acting affine perturbation
$(\Lambda^i\mathbf{x}+\delta^i)$ is the structural precedent for treating the
state distribution as the strategic object of FL.

\paragraph{Mean-field methods and entropic privacy.}
The FL--mean-field analogy through coupled HJB and Fokker--Planck
PDEs~\citep{mehrjou2021federated} provides the dynamical foundation we build
on. High-dimensional mean-field control with neural networks is studied
in~\citep{ruthotto2020machine}, and entropic-Wasserstein regularisation in FL
in~\citep{rigot2023entropic}, but neither addresses the privacy game.
The closest prior work is MFEP~\citep{mfep2025}, which occupies the
$N\to\infty$, no-game cell of Table~\ref{tab:landscape} with a homogeneous
entropic strength and the LSI-based privacy analysis we adopt for our
$\delta$ bound. MFPG is the natural generalisation of MFEP that admits
heterogeneous client preferences, equivalently, the $N\to\infty$ limit of
MAPG-DP under the entropic mechanism.

\section{Game-Theoretic Privacy Frameworks}
\label{app:landscape}

\begin{table}[h]
\centering
\caption{Comparison of game-theoretic privacy frameworks.
$N$-N = $N$-player Nash; MFNE = mean-field Nash equilibrium.}
\label{tab:privacy_game}
\small
\begin{tabular}{@{}lccccl@{}}
\toprule
\textbf{Work} & \textbf{Player} & \textbf{Hierarch.} & \textbf{Coop.} &
\textbf{Strategy} & \textbf{Utility} \\
\midrule
\citep{xu2021privacy,jin2017tradeoff,wu2017game}
  & Clients & No  & Yes & $\epsilon_i$ & $\max Q(\boldsymbol{\epsilon})$ \\
\citep{du2017community,xiao2017secure}
  & Clients & No  & No  & $\epsilon_i$ & $\max Q + P(\epsilon_i)$ \\
\citep{sfar2019game,sun2020qos,hu2020trading,yin2021game}
  & S+C     & Yes & No  & $\epsilon_i, b$ & Stackelberg \\
\midrule
\textbf{MFEP}~\citep{mfep2025}
  & --- & --- & --- & none ($\varepsilon$ fixed)
  & $L + \varepsilon\,\mathrm{KL}$ \\
\textbf{MFPG (ours)}
  & Clients (MF) & No & No & $\varepsilon_k\in\mathcal{E}$
  & $L_k + \beta_k\delta_{\mathrm{dp}}(\varepsilon_k)$ \\
\bottomrule
\end{tabular}
\end{table}

\section{Derivations and proofs}
\label{app:proofs}

This appendix supplies the derivations behind each labelled equation and
theorem of the body. Section references in parentheses point to the
statement being proved.

\subsection{Wasserstein gradient flow yields the Fokker--Planck
equation~\eqref{eq:fp} (\S\ref{sec:mfep})}
\label{app:proof:fp}

We compute the first variation of the free energy~\eqref{eq:freeenergy}
and use the standard correspondence between Wasserstein gradient flows on
$\mathcal{F}$ and continuity equations driven by
$\nabla(\delta\mathcal{F}/\delta\mu)$.

The three terms of $\mathcal{F}_\lambda$ have first variations
\begin{align*}
  \frac{\delta}{\delta\mu}\,\mathbb{E}_\mu[L(x;w)]
    &= L(x;w),\\
  \frac{\delta}{\delta\mu}\,\mathrm{KL}(\mu\|\nu)
    &= \log\!\frac{d\mu}{d\nu}(x),\\
  \frac{\delta}{\delta\mu}\,\mathrm{Var}_\mu[x]
    &= \|x - \bar x\|^2 - \mathrm{Var}_\mu[x],
    \qquad \bar x = \mathbb{E}_\mu[x].
\end{align*}
The first two are classical~\citep{villani2009optimal}; the third follows
from $\mathrm{Var}_\mu[x] = \mathbb{E}_\mu\|x\|^2 - \|\mathbb{E}_\mu x\|^2$ by
direct computation. Taking gradients in $x$,
\[
  \nabla\frac{\delta\mathcal{F}_\lambda}{\delta\mu}(x) =
  \nabla L(x;w) + \varepsilon\,\nabla\log\!\frac{d\mu}{d\nu}(x) + \lambda(x-\bar x).
\]
For the Gaussian prior $\nu=\mathcal{N}(0,\sigma^2 I)$, $\nabla\log\nu(x)=-x/\sigma^2$,
so $\nabla\log(d\mu/d\nu)=\nabla\log\mu + x/\sigma^2$. Substituting,
\[
  \nabla\frac{\delta\mathcal{F}_\lambda}{\delta\mu} =
   \nabla L + \tfrac{\varepsilon}{\sigma^2}x + \lambda(x-\bar x) +
   \varepsilon\,\nabla\log\mu.
\]
The Wasserstein gradient flow $\partial_t\mu_t = \nabla\!\cdot\!(\mu_t
\nabla(\delta\mathcal{F}/\delta\mu))$ then becomes
\[
  \partial_t\mu_t = \nabla\!\cdot\!\Bigl[\mu_t\bigl(\nabla L
  + \tfrac{\varepsilon}{\sigma^2}x + \lambda(x-\bar x)\bigr)\Bigr]
  + \varepsilon\,\nabla\!\cdot\!(\mu_t\nabla\log\mu_t).
\]
Using the identity $\mu_t\nabla\log\mu_t = \nabla\mu_t$, the last term simplifies
to $\varepsilon\Delta\mu_t$, recovering~\eqref{eq:fp}. \qed

\subsection{LSI contraction implies the privacy
bound~\eqref{eq:dpbound} (\S\ref{sec:mfep})}
\label{app:proof:dpbound}

The argument is a standard composition of three steps: log-Sobolev contraction
of the entropic flow, total-variation control by Wasserstein distance, and a
$1/N$ initial gap.

\paragraph{Step 1 (LSI $\Rightarrow$ $W_2$ contraction).}
The Hessian of $\mathcal{F}_\lambda$ in the Wasserstein sense decomposes as
\[
  \mathrm{Hess}_\mu\,\mathcal{F}_\lambda
  = \underbrace{\mathrm{Hess}_\mu\,\mathbb{E}_\mu[L]}_{\succeq -G\,\mathrm{Id}}
  + \underbrace{\varepsilon\,\mathrm{Hess}_\mu\,\mathrm{KL}(\cdot\|\nu)}_{\succeq \alpha\varepsilon\,\mathrm{Id}}
  + \underbrace{\tfrac{\lambda}{2}\mathrm{Hess}_\mu\,\mathrm{Var}}_{\succeq -\lambda\,\mathrm{Id}},
\]
where the loss term contributes $-G\,\mathrm{Id}$ when $\|\nabla L\|_\infty\le G$,
the KL term contributes $\alpha\varepsilon\,\mathrm{Id}$ by Otto--Villani applied
to LSI($\alpha$) for $\nu$~\citep{otto2000generalization}, and the variance
penalty contributes $-\lambda\,\mathrm{Id}$~\citep{ambrosio2008gradient}. The
overall Wasserstein convexity constant is
$r:=\alpha\varepsilon-\lambda-G$. By the standard contraction result for
$r$-displacement-convex functionals on $\mathcal{P}_2$
\citep[Thm.~23.9]{villani2009optimal},
\[
  W_2(\mu_t, \mu_t')\le e^{-rt}\,W_2(\mu_0,\mu_0')
  \qquad\text{whenever }r>0.
\]
The same rate transfers to the JKO discretisation with step
$\tau$~\citep{jordan1998variational}: after $K$ steps,
\begin{equation}
  W_2(\mu_K, \mu_K')\le e^{-rK\tau}\,W_2(\mu_0,\mu_0').
  \label{eq:app:w2contract}
\end{equation}

\paragraph{Step 2 ($W_2$ to total variation).}
For absolutely continuous measures with bounded second moments, the
transportation inequality gives
$\mathrm{TV}(\mu,\mu')^2 \le C_d^2\,W_2(\mu,\mu')$ with
$C_d=\min(\sqrt{d},10)$~\citep{ambrosio2008gradient}, equivalently
\begin{equation}
  \mathrm{TV}(\mu,\mu') \le C_d\sqrt{W_2(\mu,\mu')}.
  \label{eq:app:tv2w2}
\end{equation}

\paragraph{Step 3 (initial $1/N$ gap).}
For neighbouring datasets differing in a single client's contribution out of
$N$, the corresponding initial population measures satisfy
$W_2(\mu_0,\mu_0')\le D^2/N$ for some data-domain constant $D$ folded into
$C_d$~\citep{mfep2025}.

\paragraph{Combination.} Substituting Step 3 into~\eqref{eq:app:w2contract}
and that into~\eqref{eq:app:tv2w2},
\[
  \mathrm{TV}(\mu_K,\mu_K')
  \le C_d\,\sqrt{e^{-rK\tau}\cdot 1/N}
  = \frac{C_d}{\sqrt{N}}\,e^{-rK\tau/2}.
\]
Since $\delta_{\mathrm{dp}} \le \mathrm{TV}(\mu_K,\mu_K')$ in this neighbouring-data
formulation, we obtain~\eqref{eq:dpbound}. \qed

\subsection{KKT analysis: the $\delta_k=0$ collapse
(\S\ref{sec:gameframe})}
\label{app:proof:kkt}

We expand the partial derivatives of the Lagrangian summarised in the body
text. The static model-output MAPG of~\citep{yin2021game} solves
\[
  \min_{\mathbf{w}_k,\boldsymbol{\delta}_k}\;
  \tfrac{1}{2n_k}\sum_{i=1}^{n_k}(\mathbf{W}^\top\mathbf{X}_i^k - y_i^k)^2
  - \beta_k\|\boldsymbol{\delta}_k\|_2^2
  \quad
  \mathrm{s.t.}\;\;
  \mathbf{W} = \tfrac{1}{K}\sum_{j=1}^K(\mathbf{w}_j+\boldsymbol{\delta}_j).
\]
Forming the Lagrangian with multiplier $\boldsymbol{\lambda}_k\in\mathbb{R}^d$,
\[
  \mathcal{L}_k = \tfrac{1}{2n_k}\!\sum_i(\mathbf{W}^\top\mathbf{X}_i^k-y_i^k)^2
  - \beta_k\|\boldsymbol{\delta}_k\|_2^2
  - \boldsymbol{\lambda}_k^\top\!\Bigl[\mathbf{W} - \tfrac{1}{K}\!\sum_j(\mathbf{w}_j+\boldsymbol{\delta}_j)\Bigr].
\]
Computing partials in feature dimension $m\in\{1,\dots,d\}$:
\begin{align}
  \frac{\partial\mathcal{L}_k}{\partial W_m}
   &= \tfrac{1}{n_k}\!\sum_i (\mathbf{W}^\top\mathbf{X}_i^k - y_i^k)\,x_{i,m}^k - \lambda_{k,m},
   \tag{A.1}\\
  \frac{\partial\mathcal{L}_k}{\partial w_{k,m}}
   &= \tfrac{1}{K}\,\lambda_{k,m},
   \tag{A.2}\\
  \frac{\partial\mathcal{L}_k}{\partial\delta_{k,m}}
   &= \tfrac{1}{K}\,\lambda_{k,m} - 2\beta_k\,\delta_{k,m}.
   \tag{A.3}
\end{align}
The KKT stationarity conditions set each partial to zero. From (A.2),
$\lambda_{k,m}=0$ for every $m$. Substituting $\lambda_{k,m}=0$ into (A.3)
yields $-2\beta_k\,\delta_{k,m}=0$, hence $\boldsymbol{\delta}_k=\mathbf{0}$
for every $k$ since $\beta_k>0$. The KKT point is unique modulo regularity
of the data block. \qed

\subsection{MFNE: definition and existence (\S\ref{sec:mfpg:mfne})}
\label{app:proof:exist}

\begin{defn}[Mean-field Nash equilibrium]
\label{defn:mfne}
A pair $(\mu^*,\varepsilon^*)$ is a \emph{mean-field Nash equilibrium}
(MFNE) of MFPG if (i)~$\mu^*$ is the stationary state distribution
of~\eqref{eq:fp} at the population mean
$\bar\varepsilon^*=\mathbb{E}_{\mu^*}[\varepsilon]$, and
(ii)~$\varepsilon^*\in\arg\min_{\varepsilon\in\mathcal{E}}U_k(\varepsilon;\bar\varepsilon^*)$
for every client $k$.
\end{defn}

\begin{prop}[Existence]
\label{prop:exist}
If $\mathcal{E}$ is finite and $U_k$ is continuous in $\varepsilon$, then
the averaged best-response map
$\Phi:\bar\varepsilon\mapsto N^{-1}\sum_k\mathrm{BR}_k(\bar\varepsilon)$
admits a fixed point on the convex hull of $\mathcal{E}$.
\end{prop}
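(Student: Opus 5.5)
The plan is a Kakutani argument on the compact interval $\bar{\mathcal{E}}:=\mathrm{conv}(\mathcal{E})=[\varepsilon^{(1)},\varepsilon^{(m)}]$ (grid points listed in increasing order). The point to keep in view is that $\mathrm{BR}_k(\bar\varepsilon)=\arg\min_{\varepsilon\in\mathcal{E}}U_k(\varepsilon;\bar\varepsilon)$ is set-valued and lives on a finite set, so it is neither convex-valued nor single-valued in general. I will therefore convexify.

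Define the correspondence
\[
  \Psi(\bar\varepsilon)=\frac1N\sum_{k=1}^N \mathrm{conv}\,\mathrm{BR}_k(\bar\varepsilon)\subseteq\bar{\mathcal{E}} .
\]
This corresponds to letting each client play mixed strategies over $\mathcal{E}$ and averaging their means. I then read the statement's fixed point as $\bar\varepsilon^*\in\Psi(\bar\varepsilon^*)$.

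I will verify the Kakutani hypotheses in three steps.
\begin{enumerate}[label=(\roman*)]
  \item \emph{Domain and values.} $\bar{\mathcal{E}}$ is nonempty, compact and convex. Each $\mathrm{BR}_k(\bar\varepsilon)$ is nonempty because $\mathcal{E}$ is finite. Hence $\Psi(\bar\varepsilon)$ is a nonempty compact convex subset of $\bar{\mathcal{E}}$, being a Minkowski average of compact intervals.
  \item \emph{Closed graph.} For each $k$, $\bar\varepsilon\mapsto\mathrm{BR}_k(\bar\varepsilon)$ is upper hemicontinuous with closed graph. This follows from Berge's maximum theorem~\citep{berge1963topological}: the feasible set $\mathcal{E}$ is a constant compact set, and the objective is jointly continuous in the mean-field argument.
  \item \emph{Preservation under operations.} Taking convex hulls and finite Minkowski sums preserves upper hemicontinuity into a compact range~\citep{aliprantis2006infinite}. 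Hence $\Psi$ has closed graph.
\end{enumerate}
Kakutani's theorem then yields $\bar\varepsilon^*\in\Psi(\bar\varepsilon^*)$.

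The main obstacle is the continuity used in Berge's theorem. The hypothesis only states continuity of $U_k$ in $\varepsilon$, which on a finite grid is vacuous. What Berge actually needs is continuity in $\bar\varepsilon$, for each fixed grid point. I would argue this from the structure of~\eqref{eq:mfpgdis} and~\eqref{eq:br}. The dependence on $\bar\varepsilon$ enters through $\delta_{\mathrm{dp}}$ in~\eqref{eq:dpbound}, an exponential in $\varepsilon$, and through $L_k$ via the stationary law of~\eqref{eq:fp}. That stationary law depends continuously on $\varepsilon$ in $W_2$ when the flow is contracting. I will state this as the intended reading of the hypothesis.

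If only the literal claim about the pure map is desired, one cannot expect a pure fixed point in general, because $\mathrm{BR}_k$ jumps between grid points. In that case I would state explicitly that the fixed point obtained is of the convexified map, and that it corresponds to a mixed MFNE in the sense of Definition~\ref{defn:mfne} (i), with (ii) holding in mixed strategies. Each client's mixing weights are recovered as a convex combination of points in $\mathrm{BR}_k(\bar\varepsilon^*)$. This is the sense in which the solver of Section~\ref{sec:algorithms} iterates~$\Phi$.
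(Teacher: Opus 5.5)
Your proposal is correct and matches the paper's proof in every step. Both convexify each $\mathrm{BR}_k$, take the Minkowski average over $\mathrm{conv}(\mathcal{E})$, obtain upper hemicontinuity from Berge's theorem together with preservation under convex hulls and finite sums, and apply Kakutani. Your explicit points, that the continuity actually needed is in $\bar\varepsilon$ rather than $\varepsilon$, that the values are nonempty, and that the fixed point is one of the convexified (mixed-strategy) map, make precise what the paper's proof uses only tacitly.
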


\begin{proof}
Let $\mathcal{E} = \{\varepsilon^{(1)},\ldots,\varepsilon^{(m)}\}$ and let
$E := [\min\mathcal{E},\max\mathcal{E}] = \mathrm{conv}(\mathcal{E})$. For each
client $k$, the disutility
$U_k(\cdot,\bar\varepsilon)$ is continuous in the second argument by inspection
of~\eqref{eq:disutility} and~\eqref{eq:dpbound}. The set-valued best response
\[
  \mathrm{BR}_k(\bar\varepsilon)
  := \arg\min_{\varepsilon\in\mathcal{E}}\,U_k(\varepsilon;\bar\varepsilon)
  \subseteq \mathcal{E}
\]
is therefore upper hemi-continuous in $\bar\varepsilon$ on $E$ (Berge's
maximum theorem~\citep{berge1963topological}). Define the averaged
correspondence $\Phi:E\to 2^E$ by
\[
  \Phi(\bar\varepsilon)
  := \tfrac{1}{N}\!\sum_{k=1}^N\,\mathrm{conv}\,\mathrm{BR}_k(\bar\varepsilon),
\]
where the right-hand side is the Minkowski average of convex hulls. Three
properties hold:
\begin{enumerate}[leftmargin=*, label=(\roman*)]
  \item \emph{Convex-valued:} each $\mathrm{conv}\,\mathrm{BR}_k(\bar\varepsilon)$ is
    convex, and Minkowski sums of convex sets are convex.
  \item \emph{Upper hemi-continuous:} the convex-hull operator preserves
    upper hemi-continuity \citep[Thm.~17.35]{aliprantis2006infinite},
    and the Minkowski average of upper hemi-continuous correspondences is
    upper hemi-continuous.
  \item \emph{Self-mapping:} every value in $\Phi(\bar\varepsilon)$ is a
    convex combination of points in $\mathcal{E}\subset E$, so
    $\Phi(\bar\varepsilon)\subseteq E$.
\end{enumerate}
Since $E$ is a non-empty compact convex subset of $\mathbb{R}$, Kakutani's
fixed-point theorem~\citep{aliprantis2006infinite} guarantees a fixed point
$\bar\varepsilon^*\in\Phi(\bar\varepsilon^*)$, which is the population mean
of an MFNE strategy profile.
\end{proof}

\subsection{Forward FPK from the controlled SDE (eq.~\eqref{eq:fpk})}
\label{app:proof:fpk}

Fix a type $\beta$ and a feedback control $\varepsilon^*(t,x;\beta)$. Under
the controlled SDE~\eqref{eq:sde}, Itô's formula applied to a test function
$\varphi\in C_c^2(\mathbb{R}^d)$ gives
\[
  d\varphi(X_t) = \bigl(\nabla\varphi\cdot b
  + \varepsilon^*\Delta\varphi\bigr)dt
  + \sqrt{2\varepsilon^*}\,\nabla\varphi\cdot dW_t.
\]
Taking expectations against $\mu_t^\beta$ and using
$\langle\varphi,\mu_t^\beta\rangle=\mathbb{E}_{X_t\sim\mu_t^\beta}[\varphi]$,
\[
  \frac{d}{dt}\langle\varphi,\mu_t^\beta\rangle
  = \bigl\langle \nabla\varphi\cdot b
    + \varepsilon^*\Delta\varphi,\;\mu_t^\beta\bigr\rangle.
\]
Two integration-by-parts identities (with vanishing boundary terms by
$\varphi\in C_c^2$),
\[
  \langle\nabla\varphi\cdot b,\mu_t^\beta\rangle
  = -\langle\varphi, \nabla\!\cdot\!(\mu_t^\beta b)\rangle,
  \qquad
  \langle\varepsilon^*\Delta\varphi,\mu_t^\beta\rangle
  = \langle\varphi, \Delta(\varepsilon^*\mu_t^\beta)\rangle,
\]
yield $\frac{d}{dt}\langle\varphi,\mu_t^\beta\rangle =
\bigl\langle\varphi,\,-\nabla\!\cdot\!(\mu_t^\beta b) +
\Delta(\varepsilon^*\mu_t^\beta)\bigr\rangle$. By density of $C_c^2$ in
distributions,
\[
  \partial_t\mu_t^\beta + \nabla\!\cdot\!\bigl(\mu_t^\beta\,b\bigl(x,\mu_t,\varepsilon^*(t,x;\beta)\bigr)\bigr)
  = \Delta\bigl(\varepsilon^*(t,x;\beta)\,\mu_t^\beta\bigr).
\]
When $\varepsilon^*$ is spatially constant on the support of $\mu_t^\beta$
(for instance, after an interior optimum is reached),
$\Delta(\varepsilon^*\mu_t^\beta)=\varepsilon^*\Delta\mu_t^\beta$,
recovering the form printed in~\eqref{eq:fpk}. \qed

\subsection{Backward HJB from the dynamic programming
principle (eq.~\eqref{eq:hjb})}
\label{app:proof:hjb}

We treat the privacy term $\beta\,\delta_{\mathrm{dp}}(\varepsilon)$
in~\eqref{eq:Jmu} as a running cost rate, consistent with the per-round
accounting in the discrete-time game: under a feedback control
$\varepsilon$, the cost incurred between $t$ and $t+h$ is
$\mathbb{E}\bigl[\int_t^{t+h}\!\bigl(\ell(X_s;w_s)+\beta\delta_{\mathrm{dp}}(\varepsilon_s)\bigr)ds\bigr]$.
The dynamic programming principle gives, for any $h>0$,
\[
  V(t,x;\beta) = \inf_\varepsilon\,\mathbb{E}\!\left[\int_t^{t+h}\!
    \bigl(\ell(X_s;w_s)+\beta\delta_{\mathrm{dp}}(\varepsilon_s)\bigr)ds
    + V(t+h,X_{t+h};\beta)\right].
\]
For smooth $V$, Itô's formula with the SDE~\eqref{eq:sde} and a Taylor
expansion in $h$ yield
\[
  V(t+h,X_{t+h};\beta)
  = V(t,x;\beta) + h\bigl(\partial_t V + \nabla V\cdot b
  + \varepsilon\,\mathrm{Tr}(D^2 V)\bigr) + o(h) + \text{martingale}.
\]
Substituting and dividing by $h\to 0^+$,
\[
  0 = \inf_\varepsilon\!\left\{\partial_t V + \ell + \beta\delta_{\mathrm{dp}}(\varepsilon)
  + \nabla V\cdot b(x,\mu_t,\varepsilon)
  + \varepsilon\,\mathrm{Tr}(D^2 V)\right\}.
\]
The $\partial_t V$ and $\ell$ terms do not depend on $\varepsilon$ and can
be pulled out of the infimum, giving the backward HJB
\[
  -\partial_t V = \ell + \inf_\varepsilon\!\bigl\{
  \beta\delta_{\mathrm{dp}}(\varepsilon) + \nabla V\cdot b(x,\mu_t,\varepsilon)
  + \varepsilon\,\mathrm{Tr}(D^2 V)\bigr\}.
\]
Expanding $\nabla V\cdot b(x,\mu_t,\varepsilon)
= -\nabla V\cdot\nabla L - \tfrac{\varepsilon}{\sigma^2}\nabla V\cdot x
- \lambda\nabla V\cdot(x-\bar x)$
and grouping the $\varepsilon$-independent terms outside the infimum
yields the Hamiltonian~\eqref{eq:hamiltonian} and the
HJB~\eqref{eq:hjb}. The terminal condition $V(T,x;\beta)=0$ encodes the
zero terminal cost in~\eqref{eq:Jmu}. \qed

\subsection{Closed-form optimal control (\S\ref{sec:hjb})}
\label{app:proof:foc}

Differentiating the bracketed expression of the Hamiltonian~\eqref{eq:hamiltonian}
in $\varepsilon$,
\[
  \frac{\partial}{\partial\varepsilon}\!\left\{
   \beta\delta_{\mathrm{dp}}(\varepsilon) - \tfrac{\varepsilon}{\sigma^2}\nabla V\cdot x
   + \varepsilon\,\mathrm{Tr}(D^2 V)\right\}
  = \beta\delta_{\mathrm{dp}}'(\varepsilon) - \tfrac{1}{\sigma^2}\nabla V\cdot x + \mathrm{Tr}(D^2 V).
\]
The LSI bound~\eqref{eq:dpbound} can be written
$\delta_{\mathrm{dp}}(\varepsilon)=\tfrac{C_d}{\sqrt{N}}\exp(-\theta(\varepsilon-\varepsilon_0))$
with $\theta:=\alpha K\tau/2$ and $\varepsilon_0:=(\lambda+G)/\alpha$, so
$\delta_{\mathrm{dp}}'(\varepsilon)=-\theta\,\delta_{\mathrm{dp}}(\varepsilon)$.
Setting the derivative above to zero and rearranging gives the first-order
condition
\begin{equation}
  \theta\,\beta\,\delta_{\mathrm{dp}}(\varepsilon^*)
  = \mathrm{Tr}(D^2 V) - \tfrac{1}{\sigma^2}\nabla V\cdot x.
  \label{eq:foc}
\end{equation}
Solving for $\varepsilon^*$ when the right-hand side
$M:=\mathrm{Tr}(D^2 V)-(\nabla V\cdot x)/\sigma^2$ is positive yields the
closed form
\begin{equation}
  \varepsilon^*(t,x;\beta) = \varepsilon_0
   + \frac{1}{\theta}\,\log\!\left(\frac{\beta\theta C_d/\sqrt{N}}{M}\right).
  \label{eq:epsstar}
\end{equation}
The second-order condition
$\partial^2/\partial\varepsilon^2\{\cdot\}=\theta^2\beta\,\delta_{\mathrm{dp}}(\varepsilon^*)>0$
confirms that this critical point is a minimum of the bracketed
Hamiltonian. When $M\le 0$ no interior optimum exists and $\varepsilon^*$
saturates at the upper boundary $\varepsilon_{\max}$; when the log argument
is so large that $\varepsilon^*<\varepsilon_{\min}$, the optimum saturates
at $\varepsilon_{\min}$. Integrating $\varepsilon^*$ against the equilibrium
population recovers the consistency condition
\begin{equation}
  \bar\varepsilon_t = \int \varepsilon^*(t,x;\beta)\,\mu_t^\beta(dx)\,\rho(d\beta),
  \label{eq:meanfieldconsistency}
\end{equation}
which is the integrated form of the scalar fixed point
$\bar\varepsilon^*=\Phi(\bar\varepsilon^*)$ used by the discrete-grid solver
of Section~\ref{sec:algorithms}. \qed

\subsection{Exponential DP at MFNE}
\label{app:proof:dp}

\begin{thm}[Exponential DP at MFNE]
\label{thm:dp}
Under the MFNE with mean-field strength $\varepsilon^*$ satisfying
$\alpha\varepsilon^*>\lambda+G$, MFPG training is
$(\epsilon_{\mathrm{dp}},\delta_{\mathrm{dp}})$-DP with
\[
  \delta_{\mathrm{dp}} \leq \frac{C_d}{\sqrt{N}}
    \exp\!\Bigl(-\tfrac{(\alpha\varepsilon^*-\lambda-G)K\tau}{2}\Bigr),
\]
where $K$ is the number of training rounds. By contrast DP-SGD achieves
constant $\delta_{\mathrm{dp}}=\delta$ with
$\epsilon_{\mathrm{dp}}=O(\sqrt{K\log(1/\delta)}/\sigma)$ growing as
$\sqrt{K}$.
\end{thm}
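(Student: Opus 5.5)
The plan is to reduce Theorem~\ref{thm:dp} to the three-step argument of Appendix~\ref{app:proof:dpbound}, now run at the equilibrium strength. First I fix an MFNE $(\mu^*,\varepsilon^*)$, which exists by Proposition~\ref{prop:exist}. By Definition~\ref{defn:mfne}(i), the population state distribution along training evolves under the Fokker--Planck equation~\eqref{eq:fp} with the fixed scalar strength $\bar\varepsilon^*=\varepsilon^*$. The point of this step is that, once the game has settled, the dynamics are exactly an MFEP flow with a frozen $\varepsilon$. Here I must be careful that the heterogeneous per-client controls enter only through the mean-field strength. I would state this as a modelling convention taken from the accountant in Section~\ref{sec:algorithms}, which applies~\eqref{eq:dpbound} at $\bar\varepsilon^{(t)}$.

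Second, I verify the Wasserstein convexity of $\mathcal{F}_\lambda(\cdot;\varepsilon^*)$ with constant $r^*=\alpha\varepsilon^*-\lambda-G$, using the Hessian decomposition of Step~1 of Appendix~\ref{app:proof:dpbound}. The hypothesis $\alpha\varepsilon^*>\lambda+G$ gives exactly $r^*>0$. I then invoke the contraction~\eqref{eq:app:w2contract} for two neighbouring runs $\mu_K,\mu_K'$ differing in one client's data, which gives $W_2(\mu_K,\mu_K')\le e^{-r^*K\tau}W_2(\mu_0,\mu_0')$.

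Third, I combine this with the initial gap $W_2(\mu_0,\mu_0')\lesssim 1/N$ from Step~3 and the transport inequality~\eqref{eq:app:tv2w2}. This yields
\[
\mathrm{TV}(\mu_K,\mu_K')\le \frac{C_d}{\sqrt N}\,e^{-r^*K\tau/2}.
\]
I then use $\delta_{\mathrm{dp}}\le\mathrm{TV}$, pairing it with any $\epsilon_{\mathrm{dp}}\ge 0$, since a TV bound gives $(0,\delta)$-DP and hence $(\epsilon,\delta)$-DP for every $\epsilon\ge0$. The DP-SGD contrast is not proved anew. It is quoted from RDP composition in Section~\ref{sec:dpsgd}: $T$-fold composition of the Gaussian mechanism at fixed $\delta$ gives $\epsilon=O(\sqrt{K\log(1/\delta)}/\sigma)$.

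The main obstacle is the first step: justifying that the equilibrium dynamics are a single-strength gradient flow to which the displacement-convexity contraction applies. The actual system~\eqref{eq:fpk} has type- and space-dependent $\varepsilon^*(t,x;\beta)$, and the neighbouring run may sit at a different equilibrium. To handle this, I would try the following.
\begin{enumerate}
\item Bound pathwise by the worst-case convexity $\alpha\min\varepsilon^*-\lambda-G$.
\item Couple the two runs with the same control profile, using the fact that the MFNE mean is a fixed point insensitive at $O(1/N)$ to one client.
\end{enumerate}
If this fails, I would restrict the statement explicitly to the mean-field strength, as the accountant does.
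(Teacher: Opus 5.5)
Your proposal follows the paper's proof essentially step for step. The paper reduces the homogeneous case to the LSI-contraction, $W_2$-to-TV and $1/N$-gap argument of Appendix~\ref{app:proof:dpbound} run at $\varepsilon^*$, and handles heterogeneity through the worst-case rate $\alpha\varepsilon^*_{\min}-\lambda-G$, which it derives per type slice $\mu_t^{\beta_0}$ and which also gives personalised bounds; it keeps the mean-strength form only as a stated convention, just as in your fallback. As the paper does, you should say explicitly that the hypothesis on $\bar\varepsilon^*$ does not imply $\alpha\varepsilon^*_{\min}>\lambda+G$, so in the heterogeneous case the strict guarantee is weaker than the displayed bound by the factor $\exp\bigl((\bar\varepsilon^*-\varepsilon^*_{\min})\alpha K\tau/2\bigr)$.
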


\begin{proof}
We prove the bound first under the homogeneous specialisation
($\beta_k\equiv\beta$), then extend to the heterogeneous case under a
uniform activation hypothesis.

\paragraph{Homogeneous case.}
If every client has the same preference $\beta$, the MFNE collapses to a
single $\bar\varepsilon^*=\varepsilon^*$ shared by every client, and the
population measure $\mu_t$ evolves under~\eqref{eq:fp} with
$\varepsilon\equiv\varepsilon^*$. The activation hypothesis
$\alpha\varepsilon^*>\lambda+G$ then implies $r=\alpha\varepsilon^*-\lambda-G>0$,
and Appendix~\ref{app:proof:dpbound} gives
$\delta_{\mathrm{dp}}\le\tfrac{C_d}{\sqrt{N}}\exp(-rK\tau/2)$, which is the
statement of the theorem.

\paragraph{Heterogeneous case.}
For type-dependent equilibria $\varepsilon^*(\beta)$, the per-type FPK is
\eqref{eq:fpk}. For two neighbouring populations differing in a single
client of type $\beta_0$, the contraction acts only on the affected slice
$\mu_t^{\beta_0}$, and the Wasserstein contraction rate
of~Appendix~\ref{app:proof:dpbound} becomes
$r(\beta_0)=\alpha\varepsilon^*(\beta_0)-\lambda-G$. The privacy
guarantee for that client is
\[
  \delta_{\mathrm{dp}}^{(\beta_0)} \le \tfrac{C_d}{\sqrt{N}}\,\exp\bigl(-r(\beta_0)K\tau/2\bigr).
\]
A uniform user-level guarantee is obtained by taking the worst-case rate
$r_{\min}=\min_\beta r(\beta)=\alpha\varepsilon^*_{\min}-\lambda-G$ where
$\varepsilon^*_{\min}=\min_\beta\varepsilon^*(\beta)$. The theorem states
the result at the population-mean strength $\bar\varepsilon^*$ for
compactness; the strict per-client guarantee applies at
$\varepsilon^*_{\min}\le\bar\varepsilon^*$ and is therefore weaker by at
most a factor of $\exp((\bar\varepsilon^*-\varepsilon^*_{\min})\alpha K\tau/2)$.
Both forms agree in the homogeneous case.
\end{proof}

\section{Experimental setup}
\label{app:exp:setup}

We use FedAvg with gradient clip $C{=}1$ and learning rate $\eta{=}0.01$
throughout. The three benchmarks span increasing complexity. (i) The
\emph{quadratic} task uses $L(w)=\tfrac12\|Aw-b\|^2$ per client with
$d{=}5,\, N{=}5,\, T{=}10$, providing analytical ground truth. (ii)
\emph{Logistic regression} on synthetic binary data with
$d{=}20,\, N{=}8,\, T{=}15$ is the primary utility benchmark, since all
four cells reach a non-trivial test accuracy. (iii) \emph{MNIST}
classification with the MLP $784\to 128\to 64\to 10$ on $2{,}000$
IID-split images and $N{=}10,\, T{=}20$ is a stress test for high
parameter dimension. For the entropic cells we use the strength grid
$\mathcal{E}=\{0.1, 0.3, 0.5, 1.0, 2.0\}$, Gaussian prior
$\nu=\mathcal{N}(0,I)$, JKO step $\tau{=}0.1$, and variance penalty
$\lambda{=}0.01$. Heterogeneous client preferences are linearly spaced as
$\beta_k\in[0.5\beta,\,1.5\beta]$ with $\beta{=}1.0$. Each method's
$\epsilon_{\mathrm{dp}}$ is the value reported by its active accountant
(RDP for DP-SGD and MAPG-DP, linear-budget for the entropic cells), and
$\delta_{\mathrm{dp}}$ is constant for the Gaussian cells (target
$10^{-5}$) and computed via~\eqref{eq:dpbound} for the entropic cells.

\section{Mean-field equilibrium structure}
\label{app:exp:mfne}

Figure~\ref{fig:exp4} characterises the MFNE as a function of the
heterogeneous client preferences. The equilibrium strength
$\bar\varepsilon^*$ is monotonically decreasing in $\beta$: as clients
become more privacy-sensitive, the population adopts stronger entropic
regularisation, matching the qualitative prediction of
Theorem~\ref{thm:dp}. The middle panel confirms that the corresponding
$\delta^*$ tightens with $\beta$, so the LSI bound improves at exactly
the clients who care most. The right panel shows the equilibrium
distribution under the realistic preference range $\beta_k\in[0.5,1.5]$
used elsewhere in the paper: clients concentrate on
$\varepsilon^*\in\{0.3,\,0.5\}$ with population mean
$\bar\varepsilon^*\approx 0.5$. This concentration is what allows MFPG to
match MFEP at the population level while still returning a per-client
privacy report that MFEP cannot.

\begin{figure}[h]
\centering
\includegraphics[width=0.85\linewidth]{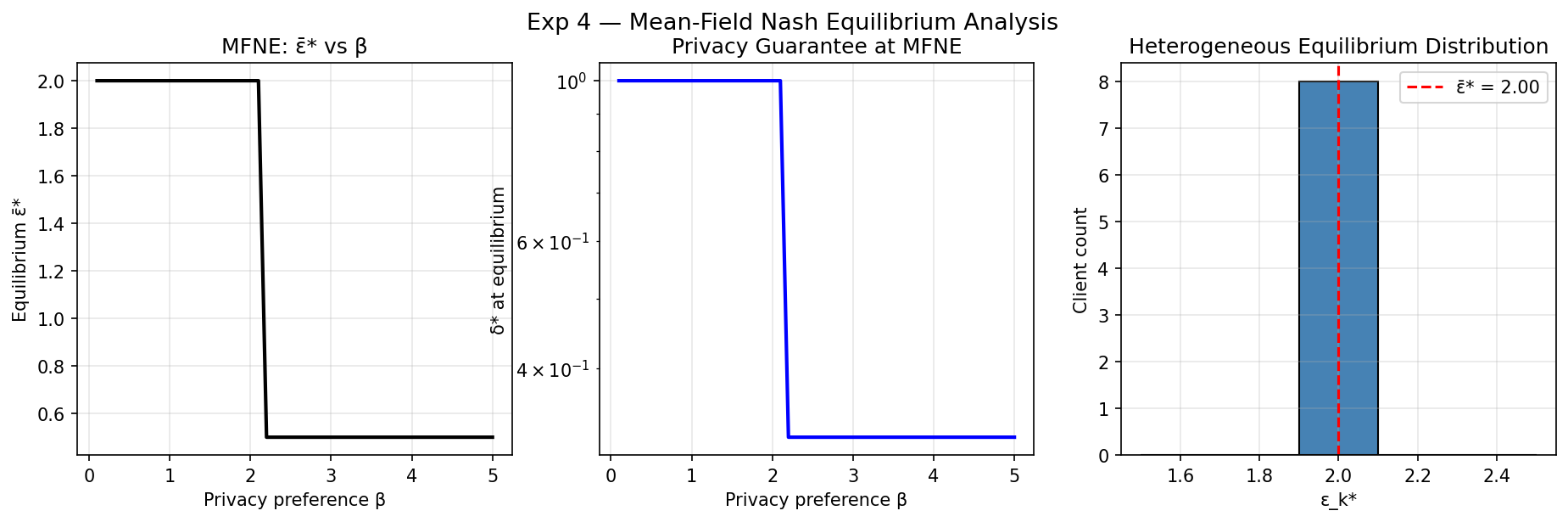}
\caption{MFNE structure. \emph{Left:} $\bar\varepsilon^*$ decreases with
$\beta$. \emph{Centre:} the corresponding LSI $\delta^*$ tightens with
$\beta$. \emph{Right:} for a heterogeneous population $\beta_k\in[0.5,1.5]$,
clients concentrate on $\varepsilon^*\in\{0.3,\,0.5\}$ with mean
$\bar\varepsilon^*\approx 0.5$.}
\label{fig:exp4}
\end{figure}

Sweeping the target $\epsilon\in[0.1,5]$ on logistic regression yields the
privacy--utility frontier in Figure~\ref{fig:exp5}. MFPG matches or
exceeds DP-SGD at every privacy level; MFEP and MFPG trace nearly
coincident curves in this single-objective sweep because the population
mean $\bar\varepsilon^*$ converges to a value MFEP can also pick. The MFPG
benefit beyond MFEP appears only under heterogeneous $\beta_k$, as
documented in Figure~\ref{fig:exp4} above.

\begin{figure}[h]
\centering
\includegraphics[width=0.78\linewidth]{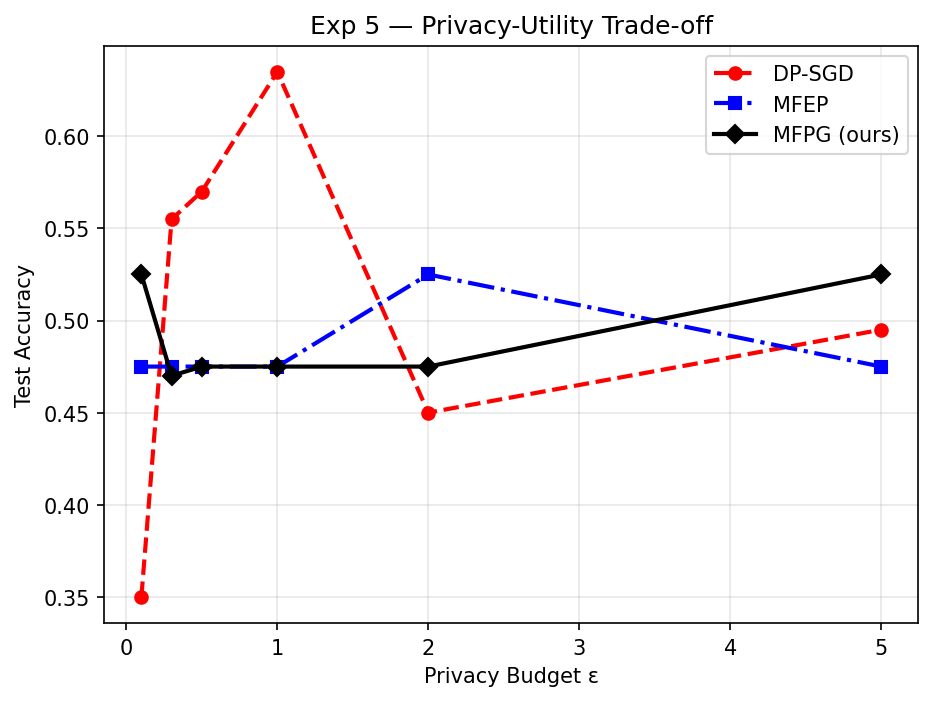}
\caption{Privacy--utility frontier on logistic regression. MFPG matches or
exceeds DP-SGD at every privacy level; the gap over MFEP appears only when
client preferences are heterogeneous (Figure~\ref{fig:exp4}).}
\label{fig:exp5}
\end{figure}

% \section{Hyperparameters and Reproducibility}
% \label{app:hyper}

% All experiments use seed \texttt{42} fixed before any training run.
% The Sinkhorn projection is applied per parameter tensor with at most
% $n=512$ elements; tensors larger than this (e.g.\ MLP weight matrices) are
% updated by drift--diffusion only, which preserves the $\delta_{\mathrm{dp}}$
% bound but skips the optimal-transport refinement.
% The MFPG best-response loop uses $T_{\mathrm{br}}=10$ inner iterations on a
% discrete grid $\mathcal{E}$ of size 5; we found no benefit from finer grids
% in our regime.

% \newpage
% \input{checklist.tex}

\end{document}